\documentclass{article}

\makeatletter
\def\input@path{{styles/}{bib/}}
\makeatother

\makeatletter
\def\input@path{{styles/}{bib/}{sections/}}
\makeatother

\PassOptionsToPackage{noend}{algorithmic}
\PassOptionsToPackage{dvipsnames}{xcolor}

\usepackage{microtype}
\usepackage{graphicx}
\usepackage{subcaption}
\usepackage{booktabs} 

\usepackage{hyperref}

\usepackage[preprint]{icml2026}

\usepackage{amsmath}
\usepackage{amssymb}
\usepackage{mathtools}
\usepackage{amsthm}

\usepackage[capitalize,noabbrev]{cleveref}

\theoremstyle{plain}
\newtheorem{theorem}{Theorem}[section]
\newtheorem{proposition}[theorem]{Proposition}

\theoremstyle{definition}
\newtheorem{definition}[theorem]{Definition}
\newtheorem{assumption}[theorem]{Assumption}
\theoremstyle{remark}
\newtheorem{remark}[theorem]{Remark}

\usepackage[textsize=tiny]{todonotes}

\usepackage{algorithmic}
\usepackage{float}
\usepackage{makecell}
\usepackage{multirow}

\usepackage{styles/turok-style}

\begin{document}

\twocolumn[
  \icmltitle{DUEL: Exact Likelihood for Masked Diffusion via Deterministic Unmasking}



  \icmlsetsymbol{equal}{*}

  \begin{icmlauthorlist}
    \icmlauthor{Gilad Turok}{cornell}
    \icmlauthor{Chris De Sa}{cornell}
    \icmlauthor{Volodymyr Kuleshov}{cornell}
  \end{icmlauthorlist}

  \icmlaffiliation{cornell}{Department of Computer Science, Cornell University, New York, New York, USA}

  \icmlcorrespondingauthor{Gilad Turok}{gt345@cornell.edu}

  \icmlkeywords{Discrete Diffusion, Masked Diffusion, Large Language Models, Likelihood Estimation}

  \vskip 0.05in
  \centering{\normalsize \textbf{Project Page:} \url{https://giladturok.github.io/duel}}

  \vskip 0.25in
]



\printAffiliationsAndNotice{}  

\begin{abstract}
    Masked diffusion models (MDMs) generate text by iteratively selecting positions to unmask and then predicting tokens at those positions.
    Yet MDMs lack proper likelihood evaluation: the evidence lower bound (ELBO) is not only a loose bound on log-likelihood, but, as we show, is also computed under the training distribution rather than the test-time distribution.
    We resolve this within our \textsc{DUEL} framework, which unifies leading MDM sampling strategies that employ \emph{deterministic} position selection.
    We prove that \textsc{DUEL} samplers admit \textbf{exact likelihood computation under the test-time distribution}---giving MDMs \emph{proper} likelihood, and hence proper perplexity, for the first time.
    This proper perplexity is the natural analogue of autoregressive perplexity and lets us revisit key questions about MDMs.
    \textbf{MDMs are substantially better than previously thought}: the MDM--autoregressive perplexity gap shrinks by up to 32\% on in-domain data and 82\% on zero-shot benchmarks.
    \textsc{DUEL} enables the first principled comparison of fast,parallel samplers across compute budgets---an analysis impossible with the ELBO and unreliable with generative perplexity---identifying a strong default method.
    Finally, oracle search over position orderings reveals MDMs can far surpass autoregressive models---achieving 36.47 vs.\ 52.11 perplexity on AG News---demonstrating the ceiling of MDM performance has not yet been reached.
\end{abstract}

\section{Introduction}
\label{sec:intro}

Diffusion models have achieved remarkable success in continuous domains such as images and video~\citep{ho2020denoising,song2020score}.
Recently, masked diffusion models (MDMs) have extended this success to discrete domains ~\citep{austin2021structured,sahoo2024simple,shi2024simplified,lou2024discrete}, and scaled to billions of parameters with strong results across language benchmarks~\citep{nie2025large,bie2025llada2,labs2025mercury,song2025seed}.

Under the any-order autoregressive (AO-ARM) interpretation~\citep{hoogeboom2021autoregressive,ou2024your,kim2025train}, MDM generation decomposes into two components.
The unmasking policy $\pi$ performs \emph{position selection}: it determines which masked positions to reveal.
The denoising distribution $p_\theta$, parameterized by a neural network $x_\theta$, performs \emph{token prediction}: given a partially masked sequence, it outputs a probability distribution over tokens at each masked position.
Together, these define a complete generative procedure: the policy selects positions, tokens are sampled from the denoiser's predictions, and this repeats until the full sequence is generated.
\cref{fig:mdm-generation} illustrates one such step.

\begin{figure*}[t]
\centering
\includegraphics[width=\linewidth]{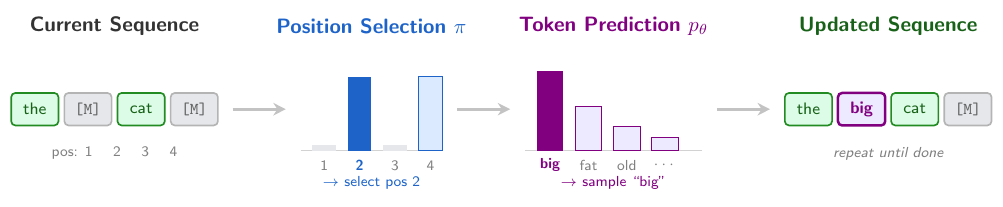}
\caption{\textbf{One step of MDM generation.}
Given a partially masked sequence, the unmasking policy $\pi$ performs \emph{position selection} (choosing position~2), and the denoising network $p_\theta$ performs \emph{token prediction} at that position (sampling ``big'').
This repeats until no masked positions remain.}
\label{fig:mdm-generation}
\end{figure*}

This decomposition has an important consequence: for a fixed pretrained network $x_\theta$, MDMs define not one generative model but a \emph{family} of models indexed by the choice of policy $\pi$.
Each policy produces a different collection of samples, implicitly defining a marginal distribution over text sequences.

Notably, many of the best-performing MDM samplers use \emph{deterministic} unmasking policies.
Methods like greedy confidence~\citep{nie2025large}, probability margin~\citep{kim2025train}, confidence threshold~\citep{wu2025fast}, and KLASS~\citep{kim2025klass} each define an \emph{unmasking rule} $F$---a deterministic function mapping partially masked sequences to positions---based on heuristics applied to the network output.
Each rule $F$ induces a deterministic unmasking policy $\pi^F$ with no randomness in position selection.

We introduce the \textbf{D}eterministic \textbf{U}nmasking \textbf{E}xact \textbf{L}ikelihood (\textsc{DUEL} \swords) framework to formalize and exploit this structure.
A \textsc{DUEL} sampler is a pair $(x_\theta, F)$ of a denoiser network and a deterministic unmasking rule.
We prove that the distribution induced by a \textsc{DUEL} sampler admits exact likelihood computation via a simple algorithm.
The key insight: likelihood normally requires marginalizing over all unmasking orderings---a super-exponential sum with $> L!$ terms---but a deterministic policy collapses this to a single term that mirrors the generation procedure.

Perplexity is the standard metric for evaluating language models, measuring the exact probability assigned to text under the generative procedure. 
However, MDMs currently lack an equivalent.
The evidence lower bound (ELBO) is both loose and measures the wrong distribution: it bounds likelihood under \emph{uniform random} position selection $\pi^{\mathrm{unif}}$ used for training, not the deterministic policies $\pi^F$ often used at test time.
Generative perplexity requires an external reference model that is biased and expensive, and ignores sample diversity---a model that repeats one good sentence scores well despite mode collapse.
\textsc{DUEL} resolves this: it computes exact likelihood under the test-time distribution induced by $\pi^F$, giving MDMs a proper perplexity metric for the first time.

In summary, our contributions are as follows:

\begin{itemize}[leftmargin=*,itemsep=2pt,topsep=2pt]
    \item \textbf{The \textsc{DUEL} framework.} We formalize \textsc{DUEL} samplers that pair a pretrained denoiser $x_\theta$ with a deterministic unmasking rule $F$. We prove this admits exact likelihood computation (\cref{thm:exact-likelihood}), unifying leading MDM sampling strategies under a common probabilistic formalism.
    \item \textbf{Proper MDM perplexity.} We establish \textsc{DUEL} likelihood as the proper perplexity metric for MDMs---the natural analogue of autoregressive perplexity---measuring the test-time distribution directly and avoiding the shortcomings of the ELBO and generative perplexity.
    \item \textbf{Reassessing the perplexity gap.} Across multiple models and datasets, \textsc{DUEL} likelihood reveals that MDMs are substantially closer to autoregressive models than the ELBO suggests, reducing the perplexity gap by up to $32\%$ on in-domain test data and $82\%$ on zero-shot benchmarks.
    \item \textbf{Comparing sampling strategies.} \textsc{DUEL} enables principled comparison of sampling strategies that is impossible with the ELBO and unreliable with generative perplexity. Fast parallel samplers can be reliably ranked across compute budgets, identifying probability margin \citep{kim2025train} as a strong default. Oracle search over orderings reveals MDMs can far surpass autoregressive baselines---36.47 vs.\ 52.11 perplexity on AG News---by exploiting flexibility in generation order that ARMs cannot match.
\end{itemize}

\section{Background}
\label{sec:background}

We briefly review masked diffusion models (MDMs) for language generation~\citep{austin2021structured, sahoo2024simple, shi2024simplified, lou2024discrete, gat2024discrete, nie2025large}.
See \cref{sec:MDM} for extended MDM details.

\paragraph{Forward Process.}
MDMs define a forward process that progressively masks a clean sequence $\x = (x^{(1)}, \ldots, x^{(L)}) \in \vocab^L$ over time $t \in [0,1]$, where $\vocab = \{1, \ldots, V\}$ is a vocabulary of $V$ tokens.
Each position independently either retains its value or transitions to a special mask token $\mask$:
\begin{equation*}
    q_{t|0}(z^{(\ell)} \mid x^{(\ell)}) = \Cat(\alpha_t \, \e_{x^{(\ell)}} + (1 - \alpha_t) \, \e_{\mask}),
\end{equation*}
where $\e_v$ is the one-hot vector for token $v$.
The noise schedule satisfies $\alpha_0 = 1$ (clean) and $\alpha_1 = 0$ (fully masked).
The mask token is \emph{absorbing}: once masked, a position remains masked, yielding partially masked sequences $\z \in \vocab \cup \{\mask\}$.

\paragraph{Reverse Process.}
Generation reverses this corruption: starting from a fully masked sequence, the model iteratively predicts and reveals tokens at masked positions $\mset(\z) = \{\ell : z_\ell = \mask\}$.
A \emph{denoising network} $x_\theta$ takes a partially masked sequence $\z$ and outputs logits predicting clean tokens at each position.
We define the token probability matrix $\mathbf{P} = \softmax(x_\theta(\z)) \in \Delta_V^L$, where each row $P_\ell$ is a distribution over tokens at position $\ell$.
The reverse transition substitutes these predictions for the unknown ground truth: $p_\theta(z_s^{(\ell)} \mid \z_t) = q_{s|t}(z_s^{(\ell)} \mid \z_t, \x \leftarrow \mathbf{P})$.

\paragraph{Training Objective.}
The data likelihood marginalizes over latent masked trajectories, $p_\theta(\x) = \sum_{\z_{0:T}} p_\theta(\x, \z_{0:T})$, which is intractable.
Following~\citet{sahoo2024simple}, we obtain a tractable ELBO that reduces to the expected negative log-likelihood of true tokens at masked positions:
\begin{equation}
    \loss_{\mathrm{ELBO}}(\theta) = \mathop{\E}\limits_{\substack{t \sim \mathrm{Unif}[0, 1] \\ \z_t \sim q_{t|0}(\cdot \mid \x)}} \left[ w_t \sum_{\ell \in \mset(\z_t)} -\log p_\theta (x^{(\ell)} \mid \z_t) \right],
    \label{eq:mdm-loss}
\end{equation}
where $w_t = \alpha'_t / (1 - \alpha_t)$ weights each timestep according to the noise schedule derivative.

\paragraph{AO-ARM Interpretation.}
MDMs admit an equivalent interpretation as any-order autoregressive models (AO-ARMs)~\citep{uria2014deep, yang2019xlnet, hoogeboom2021autoregressive}, decomposing generation into \emph{position selection} via an unmasking policy $\pi_\theta$ and \emph{token prediction} via a denoising distribution $p_\theta$.
Recent work formally establishes that the MDM training loss of \cref{eq:mdm-loss} is equivalent to the AO-ARM ELBO of \cref{eq:ao-arm-loss}: both reduce to the expected negative log-likelihood of true tokens at masked positions, averaged over random orderings ~\citep{zheng2024masked, ou2024your, kim2025train}.
We adopt and extend this perspective in \cref{sec:exact-likelihood}.

\section{The \textsc{DUEL} Framework}
\label{sec:duel}

We introduce \textsc{DUEL} samplers---pairs of a denoiser network and an unmasking rule---that define implicit generative procedures over sequences.
\Cref{sec:duel-samplers} defines these components and shows how they enable sampling.
\Cref{sec:instantiations} presents common instantiations of unmasking rules.

\subsection{\textsc{DUEL} Samplers}
\label{sec:duel-samplers}

A \textsc{DUEL} sampler combines two components: a denoising network that predicts tokens, and an unmasking rule that selects positions.
In practice, this is just a pretrained MDM together with a choice of how to sample from it.

\begin{definition}[Unmasking Rule]
\label{def:unmasking-rule}
An \emph{unmasking rule} $F$ is a deterministic function mapping a partially-revealed sequence $\z$ to a non-empty subset of masked positions:
\begin{equation*}
    \emptyset \neq F(\z) \subseteq \mathcal{M}(\z).
\end{equation*}
Since the token probabilities $\mathbf{P} = \mathrm{softmax}(x_\theta(\z))$ are deterministic functions of $\z$, rules depending on $\mathbf{P}$ are valid.
The non-emptiness constraint ensures generation progresses at each step.
\end{definition}

\begin{definition}[\textsc{DUEL} Sampler]
\label{def:duel-sampler}
A \emph{\textsc{DUEL} sampler} is a pair $(x_\theta, F)$ of a pretrained denoising network $x_\theta$ outputting token probabilities $\mathbf{P} = \mathrm{softmax}(x_\theta(\z))$, and an unmasking rule $F$ selecting which positions to reveal.
Each rule $F$ induces a deterministic unmasking policy $\pi^F$, formalized in \cref{sec:ao-arm}, that places all probability mass on the positions $F$ selects.
\end{definition}

A \textsc{DUEL} sampler defines an \emph{implicit} generative procedure: we can draw samples without ever writing down the induced distribution in closed form.
Starting from the fully masked sequence, we repeatedly: (1) compute token distributions via the denoising network, (2) select positions deterministically via $F$, and (3) sample and reveal tokens at those positions.
Generation terminates when all positions are unmasked.
\Cref{alg:duel-sample} formalizes this procedure.

\subsection{Instantiations}
\label{sec:instantiations}

\begin{table}[t]
\centering
\caption{Unmasking rules defining deterministic policies. $\mathcal{M} = \mathcal{M}(\z)$ denotes masked positions; $\mathbf{P} = \mathrm{softmax}(x_\theta(\z))$ is the token probability matrix.}
\label{tab:strategies}
\renewcommand{\arraystretch}{1.2}
\begin{tabular}{@{}l l@{}}
\toprule
\textbf{Unmask Rule $F$} & \textbf{Selected Positions $F(\z)$} \\
\midrule
Left-to-Right & $\{k \text{ smallest } \ell : z_\ell = \mask\}$ \\
Greedy Conf. & $\arg\max_{|\mathcal{I}|=k} \sum_{\ell \in \mathcal{I}} P_\ell^{(1)}$ \\
Prob. Margin & $\arg\max_{|\mathcal{I}|=k} \sum_{\ell \in \mathcal{I}} (P_\ell^{(1)} - P_\ell^{(2)})$ \\
Conf. Thresh. & $\{\ell \in \mathcal{M} : P_\ell^{(1)} \geq \mu\}$ \\
KLASS & $\{\ell \in \mathcal{M} : P_\ell^{(1)} \geq \mu,\; \mathrm{KL}(\hat{P}_\ell \| P_\ell) \leq \nu\}$ \\
\bottomrule
\end{tabular}
\end{table}

The \textsc{DUEL} framework unifies leading MDM generation strategies: each corresponds to an unmasking rule $F$ inducing a deterministic policy $\pi^F$ with exact likelihood via \cref{thm:exact-likelihood}.
We write $P_\ell^{(1)}$ and $P_\ell^{(2)}$ for the highest and second-highest token probabilities at position $\ell$; see \cref{tab:strategies}.

\paragraph{Left-to-Right.}
Unmask the $k$ smallest masked indices, recovering autoregressive generation when $k=1$.

\paragraph{Greedy Confidence. \citep{nie2025large}.}
Select the $k$ positions with highest predicted confidence $P_\ell^{(1)}$, committing first to tokens where the denoiser is most certain.

\paragraph{Probability Margin \citep{kim2025train}.}
Select the $k$ positions with largest gap $P_\ell^{(1)} - P_\ell^{(2)}$ between top-two predictions, preferring genuine certainty over uniformly high confidence.

\paragraph{Confidence Threshold \citep{wu2025fast}.}
Unmask all positions where $P_\ell^{(1)} \geq \mu$, with fallback to the most confident position if none exceed $\mu$.
This enables adaptive parallelism---more positions are unmasked when the model is confident.

\paragraph{KLASS \citep{kim2025klass}.}
Unmask positions that are both confident ($P_\ell^{(1)} \geq \mu$) and stable across consecutive steps ($\mathrm{KL}(\hat{P}_\ell \| P_\ell) \leq \nu$, where $\hat{P}_\ell$ is the previous prediction), with fallback to the most confident position.

\paragraph{Integration with Hybrid Strategies.}
These rules compose with broader MDM strategies.
For \emph{block diffusion models}~\citep{arriola2025block}, any rule $F$ applies independently within blocks of $L'$ positions while blocks are processed autoregressively.
For \emph{entropy-bounded sampling}~\citep{ben2025accelerated}, the entropy-based selection is a deterministic function of $\z$.
In both cases, any rule that deterministically maps $\z \mapsto \mathcal{I}$ admits exact likelihood computation.
\section{Exact Likelihood Computation}
\label{sec:exact-likelihood}

A \textsc{DUEL} sampler $(x_\theta, F)$ generates samples via \cref{alg:duel-sample}, but what distribution does it sample from?
\Cref{sec:ao-arm} develops the formulation needed to make this distribution explicit, and \cref{sec:theory} proves that deterministic unmasking enables its exact likelihood computation.

\subsection{Any-Order Autoregressive Formulation}
\label{sec:ao-arm}

Building on the AO-ARM interpretation of MDMs (\cref{sec:background}), we introduce \emph{ordered partitions} for parallel unmasking and formalize \emph{deterministic unmasking policies}.
See \cref{sec:any-order} for extended details.

\paragraph{Ordered Partitions.}
We formalize the progression of unmasking steps during generation of target sequence $\x \in \vocab^L$.

\begin{definition}[Ordered Partition]
\label{def:ordered-partition}
An \emph{ordered partition} of positions $[L]=\{1, \ldots, L\}$ is a tuple $\sigma = (\sigma_1, \ldots, \sigma_{T})$ of non-empty subsets satisfying:
\begin{equation*}
    \bigcup_{t=1}^{T} \sigma_t = [L], \quad \sigma_t \cap \sigma_{t^\prime} = \emptyset.
\end{equation*}
Each $\sigma_t \subseteq [L]$ is the set of positions unmasked at step $t$, and $T$ denotes the number of steps, which may depend on $\x$.
\end{definition}

Ordered partitions capture both sequential and parallel unmasking---a distinguishing feature of MDMs over ARMs.
When each part is a singleton ($|\sigma_t| = 1$), we recover \emph{sequential unmasking} with $T = L$ steps; when parts contain multiple positions, we obtain \emph{parallel unmasking} with $T < L$ steps.
We write $\sigma_{<t} \triangleq \sigma_1 \cup \cdots \cup \sigma_{t-1}$ for the set of positions revealed before step $t$, $\x^{\sigma_{<t}} \triangleq \{x^{(\ell)}\}_{\ell \in \sigma_{<t}}$ for the clean tokens at those positions, and $\x^{\sigma_t} \triangleq \{x^{(\ell)}\}_{\ell \in \sigma_t}$ for the clean tokens revealed at step $t$ itself.
\Cref{fig:ordered-partition} illustrates a concrete example.

\begin{figure}[t]
\centering
\includegraphics[width=\linewidth]{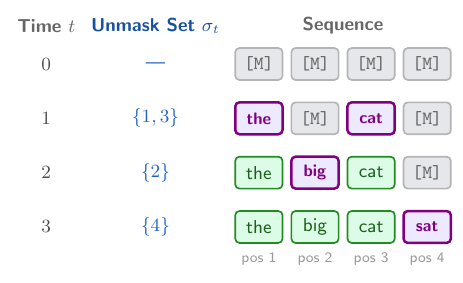}
\caption{\textbf{Unmasking trajectory for $\sigma = (\{1,3\}, \{2\}, \{4\})$.}
Starting from a fully masked sequence ($t{=}0$), positions are progressively revealed according to the ordered partition.
At $t{=}1$, positions 1 and 3 are unmasked in parallel; subsequent steps unmask one position each.}
\label{fig:ordered-partition}
\end{figure}

\paragraph{Joint Factorization.}
The joint probability of generating sequence $\x$ via ordered partition $\sigma$ factorizes into position selection and token prediction at each step (derived in \cref{app:joint-factorization}):
\begin{equation}
    p_\theta({\x}, \sigma) = \prod_{t=1}^{T} \underbrace{\pi_\theta(\sigma_t \mid \x^{\sigma_{<t}})}_{\text{position selection}} \cdot \underbrace{p_\theta(\x^{\sigma_t} \mid \x^{\sigma_{<t}})}_{\text{token prediction}},
    \label{eq:ao-arm-joint}
\end{equation}
where the \emph{unmasking policy} $\pi_\theta(\sigma_t \mid \x^{\sigma_{<t}})$ selects which masked positions to reveal conditioned on the revealed tokens, and the \emph{denoising distribution} $p_\theta(\x^{\sigma_t} \mid \x^{\sigma_{<t}})$ gives the probability of tokens at those positions.
Under conditional independence (\cref{ass:parallel-independence}), the denoising distribution factorizes over positions:
\begin{equation*}
    p_\theta(\x^{\sigma_t} \mid \x^{\sigma_{<t}}) = \prod_{\ell \in \sigma_t} p_\theta(x^{(\ell)} \mid \x^{\sigma_{<t}}),
\end{equation*}
where $p_\theta(x^{(\ell)} \mid \x^{\sigma_{<t}}) \triangleq P_\ell[x^{(\ell)}]$ with token probablities $\mathbf{P} = \softmax(x_\theta(\cdot))$ for token $x^{(\ell)}$ at position $\ell$.

\paragraph{Deterministic Unmasking Policies.}
An unmasking rule $F$ induces a \emph{deterministic unmasking policy} $\pi_\theta^F$ that places all probability mass on the positions returned by $F$:
\begin{equation}
    \pi_\theta^F(\sigma_t \mid \x^{\sigma_{<t}}) =
    \begin{cases}
    1 & \text{if } \sigma_t = F(\x^{\sigma_{<t}}) \\
    0 & \text{otherwise}.
    \end{cases}
    \label{eq:deterministic-policy}
\end{equation}
Here $\sigma_t$ ranges over all possible subsets of masked positions, and $\pi_\theta^F$ is a Dirac delta concentrating on the unique subset $F(\x^{\sigma_{<t}})$.
Since $F$ is a function, position selection is entirely deterministic given the revealed tokens.
The dependence on $\theta$ is indirect: $F$ selects positions based on the network's token probabilities $\mathbf{P} = \softmax(x_\theta(\cdot))$.

\begin{figure*}[t]
\centering
\begin{minipage}[t]{0.45\textwidth}
\begin{algorithm}[H]
\caption{\textsc{DUEL}: Sampling}
\label{alg:duel-sample}
\begin{algorithmic}[1]
\REQUIRE Denoiser network $x_\theta$, unmasking rule $F$
\ENSURE Generated sequence $\x$
\STATE $\z \gets (\texttt{m}, \ldots, \texttt{m})$ \hfill $\triangleright$ Start fully masked
\WHILE{$\mathcal{M}(\z) \neq \emptyset$}
    \STATE $\mathbf{P} \gets \mathrm{softmax}(x_\theta(\z))$ \hfill $\triangleright$ Token probabilities
    \STATE $\mathcal{I} \gets F(\z)$ \hfill $\triangleright$ Positions to unmask
    \FOR{$\ell \in \mathcal{I}$}
        \STATE \hlblue{$x_\ell \sim \mathrm{Cat}(P_\ell)$} \hfill $\triangleright$ \hlblue{Sample token}
        \STATE $\z[\ell] \gets x_\ell$ \hfill $\triangleright$ Reveal token
    \ENDFOR
\ENDWHILE
\RETURN $\x$
\end{algorithmic}
\end{algorithm}
\end{minipage}
\hfill
\begin{minipage}[t]{0.51\textwidth}
\begin{algorithm}[H]
\caption{\textsc{DUEL}: Exact Likelihood}
\label{alg:duel-likelihood}
\begin{algorithmic}[1]
\REQUIRE Sequence $\x$, denoiser network $x_\theta$, unmasking rule $F$
\ENSURE Log-likelihood $\log p_\theta^{\pi^F}(\x)$
\STATE $\z \gets (\texttt{m}, \ldots, \texttt{m})$, \; \hlgreen{$\texttt{ll} \gets 0$} \hfill $\triangleright$ Start fully masked
\WHILE{$\mathcal{M}(\z) \neq \emptyset$}
    \STATE $\mathbf{P} \gets \mathrm{softmax}(x_\theta(\z))$ \hfill $\triangleright$ Token probabilities
    \STATE $\mathcal{I} \gets F(\z)$ \hfill $\triangleright$ Positions to unmask
    \FOR{$\ell \in \mathcal{I}$}
        \STATE \hlgreen{$\texttt{ll} \gets \texttt{ll} + \log P_\ell[x_\ell]$} \hfill $\triangleright$ \hlgreen{Accumulate}
        \STATE $\z[\ell] \gets x_\ell$ \hfill $\triangleright$ Reveal token
    \ENDFOR
\ENDWHILE
\RETURN $\texttt{ll}$
\end{algorithmic}
\end{algorithm}
\end{minipage}
\vspace{0.5em}
\caption{\textbf{\textsc{DUEL} sampling and exact likelihood computation.}
Highlights mark the only differences.
Both algorithms iterate with a denoising network $x_\theta$ and unmasking rule $F$: compute token probabilities $\mathbf{P} = \mathrm{softmax}(x_\theta(\z))$, select positions via $F(\z)$, then either sample tokens or accumulate log-probabilities.
\Cref{sec:exact-likelihood} proves that \cref{alg:duel-likelihood} computes the exact likelihood under the distribution induced by $F$.}
\label{fig:duel-algos}
\end{figure*}

\paragraph{Induced Distribution.}
An AO-ARM defines the data likelihood by marginalizing over ordered partitions: $p_\theta(\x) = \sum_{\sigma} p_\theta(\x, \sigma)$.
Specializing \cref{eq:ao-arm-joint} to the deterministic policy $\pi_\theta^F$ yields the distribution induced by a \textsc{DUEL} sampler:
\begin{equation}
    \label{eq:induced-distribution}
    p_\theta^{\pi^F}(\x) = \sum_{\sigma} \prod_{t=1}^{T} \pi_\theta^F(\sigma_t \mid \x^{\sigma_{<t}}) \cdot p_\theta(\x^{\sigma_t} \mid \x^{\sigma_{<t}}).
\end{equation}
This is the distribution that \cref{alg:duel-sample} samples from.
Crucially, this equation is only made explicit in the \emph{AO-ARM} formulation of MDMs, and not in the standard \emph{latent-variable} formulation.

Computing \cref{eq:induced-distribution} directly appears intractable: the sum ranges over all ordered partitions $\sigma$, each corresponding to a different generation trajectory.
The number of such partitions is super-exponential in the sequence length $L$.
Since the denoiser conditions on different contexts under different orderings, each term in the sum requires separate neural network evaluation.
\Cref{sec:theory} shows that determinism makes this tractable.

\paragraph{Connection to MDM Training.}
The MDM--AO-ARM connection becomes precise through the unmasking policy.
Under \emph{uniform random} position selection, for a partially masked sequence $\z$,
\begin{equation*}
    \pi^{\mathrm{unif}}(\sigma_t \mid \z) = \frac{1}{|\mset(\z)|},
\end{equation*}
each masked position is equally likely to be unmasked next.
This policy arises naturally from ancestral sampling of the MDM reverse process: since the forward process independently masks each position at a random time, reversing this process treats all masked positions symmetrically.

Under uniform position selection $\pi^{\mathrm{unif}}$ with \emph{sequential} unmasking (one token per step), the proposal $q(\sigma) = 1/L!$ over permutations yields a tractable ELBO (see \cref{prop:order-dist,prop:elbo-loss} for derivations):
\begin{equation}
    \loss_{\mathrm{ELBO}}(\theta) = \mathop{\E}\limits_{q(\sigma)} \left[ \sum_{t=1}^{T} \sum_{\ell \in \sigma_t} -\log p_\theta(x^{(\ell)} \mid \x^{\sigma_{<t}}) \right].
    \label{eq:ao-arm-loss}
\end{equation}
\citet{ou2024your, kim2025train} show this is equivalent to the standard MDM loss of \cref{eq:mdm-loss}, formally establishing MDMs as AO-ARMs trained under uniform random ordering.
The ELBO serves as a training objective and is widely used for MDM evaluation; however, it does not yield proper likelihood under the test-time distribution, and hence cannot provide proper perplexity. \Cref{sec:evaluation} develops this argument.

\subsection{Theoretical Results}
\label{sec:theory}

We establish two results: different unmasking rules induce genuinely different distributions, and deterministic rules enable exact likelihood computation.

\paragraph{Policy-Dependent Distributions.}

\Cref{eq:induced-distribution} defines the distribution $p_\theta^{\pi^F}$ induced by a \textsc{DUEL} sampler $(x_\theta, F)$.
A single denoising network $x_\theta$ does not define one generative model---it defines a \emph{family} of distributions $\{p_\theta^{\pi^F} : F \in \mathcal{F}\}$, one for each unmasking rule.
Choosing a rule is not merely a computational convenience; it determines which distribution we sample from.

\begin{theorem}[Policy-Dependent Distribution]
\label{thm:policy-dependent}
If the denoising network $x_\theta$ is \emph{order-sensitive}---its predictions at masked positions depend on revealed context---then there exist unmasking rules $F_1, F_2$ inducing different distributions: $p_\theta^{\pi^{F_1}} \neq p_\theta^{\pi^{F_2}}$.
(Proof in \cref{app:duel-properties}.)
\end{theorem}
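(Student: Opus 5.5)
We will prove the statement by exhibiting two concrete sequential rules that traverse the positions in different fixed orders, and comparing the probabilities they assign to a well-chosen sequence. The first step is to specialize \cref{eq:induced-distribution} to a deterministic rule. For any $\x$, exactly one ordered partition has nonzero weight: the one obtained by setting $\sigma_t = F(\x^{\sigma_{<t}})$ recursively. This is well defined because $F(\z)\neq\emptyset$ and $F(\z)\subseteq\mset(\z)$, so the recursion terminates after at most $L$ steps. Every other $\sigma$ contains some factor $\pi_\theta^F = 0$. Hence
\begin{equation*}
p_\theta^{\pi^F}(\x) = \prod_{t=1}^{T} \prod_{\ell\in\sigma^F_t(\x)} p_\theta(x^{(\ell)}\mid \x^{\sigma^F_{<t}}),
\end{equation*}
which is the product that \cref{alg:duel-likelihood} accumulates.

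Next we formalize ``order-sensitive'' in the weakest usable form. There exist a partially masked input $\z$, a masked position $i\in\mset(\z)$, another masked position $j\neq i$, and a token $v$ such that the prediction at $i$ changes when $j$ is revealed to $v$:
\begin{equation*}
P_i(\z)\neq P_i(\z[j\gets v]).
\end{equation*}
We take $F_1$ to be the rule that first reveals exactly the positions revealed in $\z$, one at a time in increasing index order. It then reveals $i$, then $j$, then the remaining positions left to right. We take $F_2$ to be identical except that it reveals $j$ before $i$. Both rules depend only on which positions are masked, so they are valid unmasking rules under \cref{def:unmasking-rule}.

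We then compare the two distributions through a marginal. Under $F_1$, position $i$ is generated conditioned on the context of $\z$ alone, with prefix tokens drawn from the model. Let $E$ be the event that the revealed positions of $\z$ take their values in $\z$. Conditional on $E$, the law of $x^{(i)}$ under $F_1$ is $P_i(\z)$. Under $F_2$, the conditional law of $x^{(i)}$ given $E$ is the mixture $\sum_{v'} P_j(\z)[v']\,P_i(\z[j\gets v'])$. Equivalently, consider the event $E\wedge\{x^{(j)}=v\}$. Its joint law with $x^{(i)}$ is $\Pr(E)\,P_i(\z)[\cdot]\,P_j(\z[i\gets\cdot])[v]$ under $F_1$ and $\Pr(E)\,P_j(\z)[v]\,P_i(\z[j\gets v])[\cdot]$ under $F_2$. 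The prefix probability $\Pr(E)$ is the same under both rules and is positive because softmax outputs are strictly positive.

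The main obstacle is that a network could in principle produce these two joint tables equal even though $P_i$ depends on $x^{(j)}$, for instance if it were a consistent joint model. For a general network this is not forced, so order-sensitivity alone is slightly too weak. We handle this by making the definition precise: we call the network order-sensitive when the two chain-rule products differ for some pair $(i,j)$ and context, meaning
\begin{equation*}
P_i(\z)[u]\,P_j(\z[i\gets u])[v]\neq P_j(\z)[v]\,P_i(\z[j\gets v])[u]
\end{equation*}
for some $u,v$. This is exactly the failure of the two orderings to agree, and it is the natural reading of ``predictions depend on revealed context'' in a way that matters. Under this definition the event $E\wedge\{x^{(i)}=u,\,x^{(j)}=v\}$ receives different probability under the two rules. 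Since the remaining positions are completed by the same rule, summing over them gives $p_\theta^{\pi^{F_1}}\neq p_\theta^{\pi^{F_2}}$. In the write-up we will also remark that if one insists on the weaker mixture condition, it suffices to compare the $x^{(i)}$-marginals, which differ unless an exact cancellation occurs.
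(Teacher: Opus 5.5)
Your diagnosis in the last paragraph is right, and it is sharper than the paper's own proof. The paper does exactly the sequential $i$/$j$ swap you describe. It then compares only the factor at the later-predicted position for a single sequence, and concludes that the likelihoods differ without accounting for the other factor, which also changes.

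Your consistent-joint-model example shows this cannot be repaired within one-position-per-step rules. Suppose $P_\ell(\z)$ are the exact conditionals of a full-support joint $q$ with dependencies. Then the network is order-sensitive, yet every such deterministic rule, adaptive or not, induces exactly $q$. This is because, for each $\x$, the product along $\sigma^*(\x)$ is a chain-rule factorization of $q(\x)$.

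The gap in your proposal is how you respond to this. You replace the hypothesis by the condition \emph{the two chain-rule products differ for some context}. Via adjacent transpositions, that condition is equivalent to the existence of two sequential rules with different distributions. So the argument is close to circular and does not prove the theorem as stated. The closing remark about the weaker condition is not an argument either: the exact cancellation it hopes to exclude is precisely what happens for consistent networks.

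The missing idea is parallel unmasking, which \cref{def:unmasking-rule} explicitly allows. With it, your original weak form $P_i(\z)\neq P_i(\z[j\gets v])$ suffices. (The paper's formal definition also lets $\z$ and $\z'$ differ at positions other than the toggled one. Masking the disagreeing positions one at a time, never touching the predicted position, produces an adjacent pair of your one-position form.)

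Define the two rules as follows. Both first reveal, one at a time and in a common order, the positions that are unmasked in $\z$. Then $F_1$ reveals $\{i,j\}$ in a single step, while $F_2$ reveals $j$ and then $i$. Both finish left to right. With $E$ as in your proposal,
\begin{align*}
\mathrm{Pr}_{F_1}\bigl(E,\,x^{(j)}=v,\,x^{(i)}=u\bigr) &= \mathrm{Pr}(E)\,P_j(\z)[v]\,P_i(\z)[u],\\
\mathrm{Pr}_{F_2}\bigl(E,\,x^{(j)}=v,\,x^{(i)}=u\bigr) &= \mathrm{Pr}(E)\,P_j(\z)[v]\,P_i(\z[j\gets v])[u].
\end{align*}
The $j$-factor is now common and strictly positive. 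Choosing $u$ with $P_i(\z)[u]\neq P_i(\z[j\gets v])[u]$ therefore makes the two probabilities differ. Your common-prefix and summing-out steps then finish the proof under the theorem's actual hypothesis.
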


\paragraph{Exact Likelihood Computation.}

Computing $p_\theta^{\pi^F}(\x)$ from \cref{eq:induced-distribution} appears to require summing over super-exponentially many ordered partitions.
Determinism eliminates this burden.
Given any partially masked sequence $\z$, the rule $F(\z)$ outputs a unique set of positions---there is no randomness, so only one choice is possible.
Any ordered partition that deviates from what $F$ would select receives zero probability under $\pi_\theta^F$.
And once a token is revealed, it stays revealed, so the partial sequence at each step is fully determined by the input $\x$.
This means exactly one ordered partition $\sigma^*$ is consistent with the policy, collapsing \cref{eq:induced-distribution} to a single term.

\begin{theorem}[\textsc{DUEL} Exact Likelihood]
\label{thm:exact-likelihood}
Let $(x_\theta, F)$ be a \textsc{DUEL} sampler.
\Cref{alg:duel-likelihood} computes $\log p_\theta^{\pi^F}(\x)$ exactly, implementing:
\begin{equation}
    \log p_\theta^{\pi^F}(\x) = \sum_{t=1}^{T} \sum_{\ell \in \sigma^*_t} \log p_\theta(x^{(\ell)} \mid \x^{\sigma^*_{<t}}),
    \label{eq:exact-loglikelihood}
\end{equation}
where $\sigma^* = (\sigma^*_1, \ldots, \sigma^*_{T})$ is the unique ordered partition satisfying $\sigma^*_t = F(\x^{\sigma^*_{<t}})$ at each step.
(Proof in \cref{thm:exact-likelihood-app}.)
\end{theorem}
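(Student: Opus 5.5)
The approach is to start from the induced distribution in \cref{eq:induced-distribution} and show that, for a fixed $\x$, every ordered partition except one has a vanishing term. The surviving term then factorizes step by step, and \cref{alg:duel-likelihood} accumulates exactly those factors.

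\textbf{Step 1: construct $\sigma^*$ and show it exists and is unique.} Define $\sigma^*$ recursively. Set $\z_0$ to the fully masked sequence and $\sigma^*_t = F(\z_{t-1})$, where $\z_t$ reveals $\x$ on $\sigma^*_{\le t}$ and masks everything else. Here $\z_{t-1}$ is precisely the partially masked sequence encoding $\x^{\sigma^*_{<t}}$.

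By \cref{def:unmasking-rule}, $F(\z_{t-1})$ is a non-empty subset of $\mset(\z_{t-1})$. So the parts are disjoint and non-empty, and the number of masked positions strictly decreases at each step. The recursion therefore terminates after some $T \le L$ steps with $\bigcup_t \sigma^*_t = [L]$, which makes $\sigma^*$ an ordered partition in the sense of \cref{def:ordered-partition}.

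For uniqueness, suppose $\sigma$ is another ordered partition with $\sigma_t = F(\x^{\sigma_{<t}})$ for all $t$. Induct on $t$. If $\sigma_{<t} = \sigma^*_{<t}$, then the conditioning sequences coincide, since both reveal the same clean tokens of $\x$ on the same positions. Hence $\sigma_t = F(\cdot) = \sigma^*_t$. It follows that the two partitions agree part by part and have the same length.

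\textbf{Step 2: collapse the sum.} Take any $\sigma \ne \sigma^*$ and let $t$ be the first index where they differ. Such an index exists: if one partition were a proper prefix of the other, the shorter one would fail to cover $[L]$.

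If $\sigma_t$ differs from $\sigma^*_t$ (or one of them is undefined at $t$), then $\sigma_{<t} = \sigma^*_{<t}$. This forces $\sigma_t \ne F(\x^{\sigma_{<t}})$, so by \cref{eq:deterministic-policy} the factor $\pi^F_\theta(\sigma_t \mid \x^{\sigma_{<t}})$ equals $0$, and the whole product for $\sigma$ vanishes.

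For $\sigma^*$ every policy factor equals $1$. Therefore
\[
p_\theta^{\pi^F}(\x) = \prod_{t=1}^{T} p_\theta(\x^{\sigma^*_t} \mid \x^{\sigma^*_{<t}}).
\]
Applying the conditional-independence factorization (\cref{ass:parallel-independence}) and taking logarithms yields \cref{eq:exact-loglikelihood}.

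\textbf{Step 3: match the algorithm to the formula.} Show by induction on loop iterations that at the start of iteration $t$, the variable $\z$ in \cref{alg:duel-likelihood} equals $\z_{t-1}$ and $\texttt{ll}$ equals the partial sum over steps $<t$. Then $\mathcal{I} = F(\z) = \sigma^*_t$, and the inner loop adds $\sum_{\ell\in\sigma^*_t}\log P_\ell[x_\ell]$. By definition this is $\sum_{\ell\in\sigma^*_t}\log p_\theta(x^{(\ell)}\mid \x^{\sigma^*_{<t}})$. The loop exits exactly when $\mset(\z)=\emptyset$, that is, after $T$ iterations.

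One subtlety needs care. The conditioning probabilities $\mathbf{P}$ must be computed from $\z$ before any of the tokens in $\mathcal{I}$ are revealed. The algorithm satisfies this, because $\mathbf{P}$ is computed once at the top of the iteration.

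\textbf{Main obstacle.} The main obstacle is bookkeeping rather than depth. Two points need care. First, partitions of different lengths $T$ must be handled in the comparison argument of Step 2. Second, for rules like KLASS, $F$ depends on the previous prediction $\hat P$, so it is not literally a function of $\z$ alone. I would handle this by noting that along a fixed $\x$ the history is itself determined by $\x^{\sigma_{<t}}$ and the past partition. One can then enlarge the domain of $F$ to the revealed history, and the same induction goes through unchanged.
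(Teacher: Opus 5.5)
Your proof is correct and follows essentially the same route as the paper. The paper likewise constructs the unique trajectory $\sigma^*$ forced by $F$, uses the first index where another ordered partition differs to show its policy factor under \cref{eq:deterministic-policy} is $0$, and expands the surviving term with every policy factor equal to $1$ plus conditional independence. Your extra bookkeeping fills in details the paper leaves implicit: termination via non-emptiness of $F(\z)$, comparison of partitions of different lengths, the loop invariant matching \cref{alg:duel-likelihood} to the formula, and the history-dependence of KLASS.
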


\paragraph{Likelihood Follows Generation.}
This is the crux of \textsc{DUEL}: \emph{the likelihood computation follows the same path as generation}, simulating the unmasking process but revealing true tokens instead of sampling them.
\Cref{alg:duel-likelihood} implements this procedure.

\section{Evaluation via DUEL}
\label{sec:evaluation}

Exact likelihood under a model's own generation procedure is the foundation of intrinsic evaluation for language models; its primary application is perplexity, the standard metric for comparing autoregressive models.
MDMs lack an equivalent.
The two existing metrics each fall short (\cref{sec:eval-existing}), and \textsc{DUEL} fills this gap: it computes exact likelihood under the test-time distribution $p_\theta^{\pi^F}(\x)$ of \cref{eq:induced-distribution}, giving MDMs proper perplexity for the first time (\cref{sec:eval-capabilities}).

\subsection{Limitations of Existing Metrics}
\label{sec:eval-existing}

\paragraph{ELBO.}
The ELBO has two problems as an evaluation tool.
First, the variational gap $\log p_\theta^{\pi^{\mathrm{unif}}}(\x) - (-\mathcal{L}_{\mathrm{ELBO}})$ can be substantial, systematically underestimating model quality.
Second, even if the bound were tight, the ELBO measures $p_\theta^{\pi^{\mathrm{unif}}}$, not the test-time distribution $p_\theta^{\pi^F}$ under deterministic policies.

\paragraph{Generative Perplexity.}
Generative perplexity draws samples from the model and scores them under an external reference model, typically GPT-2 \citep{radford2019language}.
Scores therefore reflect the reference model's biases as much as the MDM's quality~\citep{wang2024large}, and a model that repeats the same high-quality phrase scores well despite catastrophic mode collapse.
Post-hoc patches---entropy as a diversity check, MAUVE~\citep{pillutla2021mauve} for distributional comparison---introduce additional metrics without resolving the core issue: sample-based evaluation is best reserved for when exact likelihood is unavailable.

\subsection{What DUEL Enables}
\label{sec:eval-capabilities}

The ELBO is sensitive to the denoiser $x_\theta$ but ignores the unmasking policy $F$; generative perplexity is sensitive to both but relies on a biased reference model.
\textsc{DUEL} samplers, defined by the tuple $(x_\theta, F)$, are the first metric sensitive to \emph{both} components without reference-model bias, enabling two categories of experiments:
\begin{itemize}
    \item \textbf{Accurate MDM--AR comparison.} We can compare MDMs and AR models via perplexity under their respective test-time distributions, rather than the ELBO's systematic underestimate (\cref{sec:perplexity-gap}).
    \item \textbf{Reliable evaluation of sampling strategies.} We can isolate the effect of the unmasking policy by fixing the denoiser $x_\theta$ and varying the rule $F$ (\cref{sec:sampling-strategies}).
\end{itemize}

\paragraph{Limitations.}
Like AR perplexity, \textsc{DUEL} ignores token sampling strategies (nucleus, top-$k$) used at generation time.
Computationally, it requires one forward pass per unmasking step---matching generation cost but exceeding the ELBO's 1--128 Monte Carlo samples.
Finally, downstream benchmarks remain useful for measuring capabilities perplexity does not capture.

\section{Experiments}
\label{sec:experiments}

We demonstrate \textsc{DUEL} through two categories of experiments.
First, we quantify how much of the perplexity gap between MDMs and ARMs is due to evaluation methodology (\cref{sec:perplexity-gap}).
Second, we use \textsc{DUEL} to compare sampling strategies---an analysis impossible with the ELBO and unreliable with generative perplexity (\cref{sec:sampling-strategies}).

\subsection{Perplexity Gap}
\label{sec:perplexity-gap}

\paragraph{Setup.}
MDMs have consistently lagged behind ARMs in perplexity benchmarks~\citep{nie2024scaling}.
We evaluate whether this gap is inflated by evaluation methodology.
We compare an ARM baseline, SEDD~\citep{lou2024discrete}, MDLM~\citep{sahoo2024simple}, and BD3-LM~\citep{arriola2025block} with block sizes $L' \in \{4, 8, 16\}$, all trained with comparable model sizes, compute budgets, and data. All MDMs use greedy confidence unmasking restricted to a given block size.
We evaluate \emph{in-domain} on OpenWebText (OWT) and LM1B (\cref{tab:ppl-owt,tab:ppl-lm1b}), \emph{zero-shot} on PTB, Wikitext, LM1B, Lambada, and AG News (\cref{tab:gap-closed-zeroshot}; full results in \cref{tab:ppl-zero-shot}), and at \emph{large scale} with LLaDA-8B~\citep{nie2025large} and Llama3-8B~\citep{touvron2023llama} on Wikitext, Lambada, and AG News (\cref{tab:ppl-llada}).
Details in \cref{app:ppl-gap}.

\begin{table}[!t]
\centering
\caption{In-domain test perplexity ($\downarrow$) on OWT. ARM baseline: 17.54. \textbf{Bold}: best MDM. Gap closed defined in \cref{eq:gap-closed}.}
\label{tab:ppl-owt}
\begin{tabular}{l cc c}
\toprule
\textbf{Model} & \textbf{ELBO} & \textbf{\textsc{DUEL}} & \textbf{Gap Closed} \\
\midrule
SEDD & $\leq$24.10 & \textbf{22.58} & 23.2\% \\
MDLM & $\leq$22.98 & \textbf{21.86} & 20.6\% \\
BD3-LM $L'$=4 & $\leq$20.73 & \textbf{19.73} & 31.3\% \\
BD3-LM $L'$=8 & $\leq$21.68 & \textbf{20.37} & 31.6\% \\
BD3-LM $L'$=16 & $\leq$22.27 & \textbf{20.76} & 31.9\% \\
\bottomrule
\end{tabular}
\end{table}

\begin{table}[!t]
\centering
\caption{In-domain test perplexity ($\downarrow$) on LM1B. ARM baseline: 22.83. \textbf{Bold}: best MDM. Gap closed defined in \cref{eq:gap-closed}.}
\label{tab:ppl-lm1b}
\begin{tabular}{l cc c}
\toprule
\textbf{Model} & \textbf{ELBO} & \textbf{\textsc{DUEL}} & \textbf{Gap Closed} \\
\midrule
SEDD & $\leq$32.68 & \textbf{31.51} & 11.9\% \\
MDLM & $\leq$31.78 & \textbf{29.03} & 30.7\% \\
BD3-LM $L'$=4 & $\leq$28.23 & \textbf{26.97} & 23.3\% \\
BD3-LM $L'$=8 & $\leq$29.83 & \textbf{27.85} & 28.3\% \\
BD3-LM $L'$=16 & $\leq$30.60 & \textbf{28.39} & 28.4\% \\
\bottomrule
\end{tabular}
\end{table}

\paragraph{Metric.}
We define the \emph{perplexity gap} $\Delta = \mathrm{PPL}_{\mathrm{MDM}} - \mathrm{PPL}_{\mathrm{ARM}}$.
Since the ELBO yields an upper bound, $\Delta_{\mathrm{ELBO}}$ may overestimate the true gap; \textsc{DUEL} gives exact perplexity.
The \emph{gap closed} measures what fraction of the ELBO gap is eliminated by exact evaluation:
\begin{equation}
    \text{Gap Closed (\%)} = \frac{\Delta_{\mathrm{ELBO}} - \Delta_{\textsc{DUEL}}}{\Delta_{\mathrm{ELBO}}} \times 100.
    \label{eq:gap-closed}
\end{equation}

\paragraph{In-Domain Results.}
\Cref{tab:ppl-owt,tab:ppl-lm1b} show test-set perplexity on OWT and LM1B.
Across all models, \textsc{DUEL} closes 21--32\% of the gap on OWT and 12--31\% on LM1B.

\paragraph{Zero-Shot Results.}
\Cref{tab:gap-closed-zeroshot} summarizes the gap closed on zero-shot benchmarks (full perplexities in \cref{tab:ppl-zero-shot}).
\textsc{DUEL} closes 30\% of the gap on average, with BD3-LM reaching 82\% on PTB.%

\begin{table}[t]
\centering
\caption{Gap closed (\%) by \textsc{DUEL} on zero-shot evaluation (\cref{eq:gap-closed}). Models trained on OWT; BD3-LM uses $L'=4$. Full perplexity values in \cref{tab:ppl-zero-shot}.}
\label{tab:gap-closed-zeroshot}
\begin{tabular}{l ccc}
\toprule
\textbf{Dataset} & \textbf{SEDD} & \textbf{MDLM} & \textbf{BD3-LM} \\
\midrule
PTB & 31.3\% & 34.3\% & 81.8\% \\
Wikitext & 40.8\% & 28.5\% & 31.4\% \\
LM1B & 22.1\% & 29.0\% & 29.8\% \\
AG News & 25.7\% & 27.8\% & 51.7\% \\
\midrule
\textit{Average} & 30.0\% & 29.9\% & 48.7\% \\
\bottomrule
\end{tabular}
\end{table}

\paragraph{Large-Scale Results.}
\Cref{tab:ppl-llada} scales to 8B parameters; \textsc{DUEL} consistently reduces LLaDA perplexity across all datasets, confirming exact evaluation applies at scale.
Since LLaDA and Llama3 differ in training data and compute, we report only raw perplexity.

\begin{takeawaybox}
\textbf{Takeaway.}
Proper perplexity evaluation via \textsc{DUEL} \emph{substantially} closes the MDM--ARM perplexity gap.
The gains come entirely from evaluation, not model changes: the ELBO averages over all orderings equally, including poor ones; \textsc{DUEL} evaluates under the deterministic policies that avoid such orderings.
\end{takeawaybox}

\begin{table}[t]
\centering
\caption{Zero-shot perplexity ($\downarrow$) for 8B models. \textbf{Bold}: better of ELBO/\textsc{DUEL}.}
\label{tab:ppl-llada}
\setlength{\tabcolsep}{4pt}
\begin{tabular}{@{}llccc@{}}
\toprule
\textbf{Model} & \textbf{Method} & \textbf{Wikitext} & \textbf{Lambada} & \textbf{AG News} \\
\midrule
Llama3 & Exact & 7.94 & 32.40 & 41.29 \\
\midrule
\multirow{2}{*}{LLaDA}
  & ELBO & $\leq$15.29 & $\leq$39.04 & $\leq$85.17 \\
  & \textsc{DUEL} & \textbf{14.50} & \textbf{36.00} & \textbf{78.91} \\
\bottomrule
\end{tabular}
\end{table}

\subsection{Comparing Sampling Strategies}
\label{sec:sampling-strategies}

\textsc{DUEL} enables controlled comparison of unmasking rules (varying $F$, fixed $x_\theta$) and denoisers (varying $x_\theta$, fixed $F$) via exact likelihood.
The ELBO cannot compare samplers---it ignores unmasking policy---while generative perplexity relies on biased reference models.
We demonstrate through two experiments: fast sampler comparison (\cref{sec:fast-sampling}) and optimal ordering exploration (\cref{sec:oracle-perplexity}).

\subsubsection{Comparing Fast Samplers}
\label{sec:fast-sampling}

MDMs generate by iteratively unmasking tokens over multiple steps, where the number of function evaluations (NFE) controls the trade-off between generation quality and speed.
Fewer NFE enable greater parallelism---the key advantage of MDMs over ARMs---but risk degrading output quality.
We compare four unmasking rules---left-to-right, greedy confidence, probability margin, and confidence threshold---across NFE budgets, and show that \textsc{DUEL} provides reliable rankings where generative perplexity fails.

\paragraph{Setup.}
We evaluate BD3-LM ($L'=16$) trained on OWT.
We vary NFE $\in \{128, 256, 512, 1024\}$ and report \textsc{DUEL} perplexity alongside generative perplexity (via GPT-2), entropy, and MAUVE computed from generated samples. The ELBO ($\leq$23.52) is invariant to unmasking rules and NFE.

\paragraph{Results.}
\Cref{tab:rule-comparison} reports \textsc{DUEL} perplexity.
At 1024 NFE, all rules converge to similar perplexity ($\approx$21--22).
At lower NFE, probability margin consistently achieves the lowest perplexity, followed closely by greedy confidence.
At 128 NFE, probability margin (140.38) outperforms greedy confidence (164.94), the next-best rule.

\begin{table}[t]
\centering
\caption{\textsc{DUEL} perplexity ($\downarrow$) by unmasking rule on OWT. Model: BD3-LM ($L'$=16). ELBO: $\leq$23.52. \textbf{Bold}: best per NFE.}
\label{tab:rule-comparison}
\begin{tabular}{@{}lcccc@{}}
\toprule
\textbf{Unmask Rule $F$} & \textbf{128} & \textbf{256} & \textbf{512} & \textbf{1024} \\
\midrule
Left-to-Right & 240.27 & 109.71 & 45.99 & \textbf{21.46} \\
Greedy Conf. & 164.94 & 66.64 & 34.74 & 22.03 \\
Prob. Margin & \textbf{140.38} & \textbf{57.48} & \textbf{32.24} & 22.05 \\
Conf. Thresh.$^\dagger$ & 226.97 & 116.83 & 43.48 & 22.05 \\
\bottomrule
\end{tabular}
\vspace{0.3em}

{\footnotesize $^\dagger$Adaptive NFE; thresholds chosen to approximate target NFE.}
\end{table}

\begin{table}[t]
\centering
\caption{Oracle perplexity ($\downarrow$) on AG News for BD3-LM ($L'=4$). All \textsc{DUEL} evaluations use block-restricted unmasking with $L'=4$. The oracle searches all $4!=24$ block permutations.}
\label{tab:oracle-ppl}
\begin{tabular}{@{}lllc@{}}
\toprule
\textbf{Model} & \textbf{Method} & \textbf{Unmask Rule $F$} & \textbf{Perplexity} \\
\midrule
ARM & Exact & --- & 52.11 \\
\midrule
BD3-LM & ELBO & --- & $\leq$61.67 \\
\midrule
\multirow{4}{*}{BD3-LM}
  & \textsc{DUEL} & Left-to-Right & 54.94 \\
  & \textsc{DUEL} & Greedy Conf. & 56.73 \\
  & \textsc{DUEL} & Prob.\ Margin & 57.80 \\
  & \textsc{DUEL} & Conf. Thresh.$^\dagger$ & 55.68 \\
\midrule
BD3-LM & \textsc{DUEL} & Oracle & \textbf{36.47} \\
\bottomrule
\end{tabular}
\vspace{0.3em}

{\footnotesize $^\dagger$Adaptive NFE; thresholds chosen to approximate target NFE.}
\end{table}

\begin{figure}[t]
    \centering
    \includegraphics[width=\linewidth]{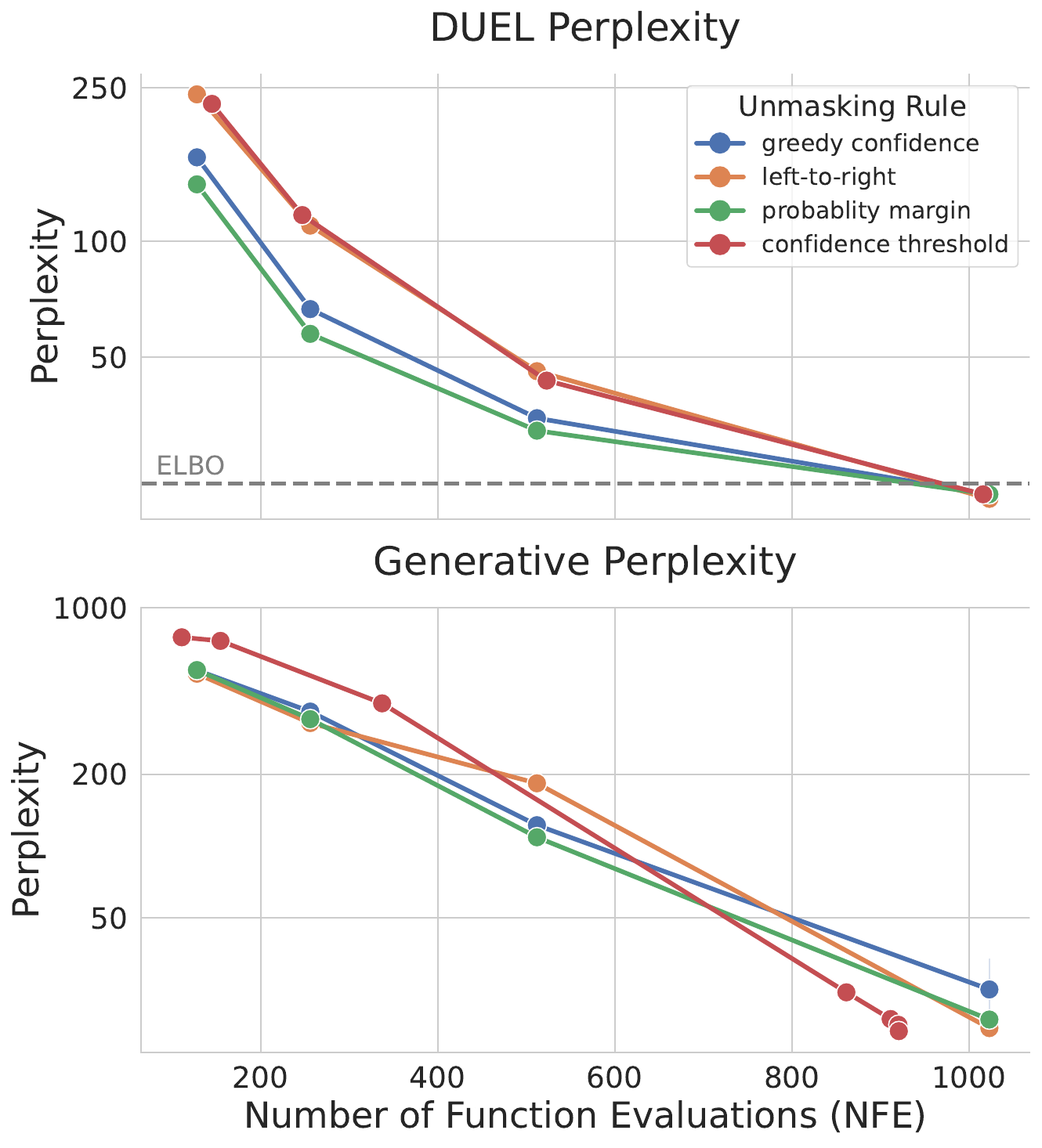}
    \caption{\textbf{Comparing fast samplers.}
    \textbf{Top:} \textsc{DUEL} perplexity. \textbf{Bottom:} Generative perplexity.
    Dashed line: ELBO ($\leq$23.52).
    \textsc{DUEL} yields consistent rankings across NFE (probability margin best at low NFE, convergence at high NFE).
    Generative perplexity rankings cross repeatedly, making it unreliable.}
    \label{fig:rule_comparison}
\end{figure}

\paragraph{DUEL Gives Consistent Rankings.}
\Cref{fig:rule_comparison} (top) shows that \textsc{DUEL} perplexity yields \emph{consistent} rankings across NFE budgets: probability margin is best at low NFE, followed by greedy confidence, with all methods converging at high NFE.
This enables principled sampler selection---practitioners can confidently choose probability margin when compute is limited.

\paragraph{Generative Perplexity Fails at Low NFE.}
\Cref{fig:rule_comparison} (bottom) reveals that generative perplexity rankings cross repeatedly.
Strikingly, left-to-right achieves the \emph{lowest} generative perplexity at 128 and 256 NFE despite having the \emph{worst} \textsc{DUEL} perplexity.
Entropy explains this discrepancy: left-to-right produces low-entropy degenerate text at low NFE, which autoregressive evaluators reward with low perplexity.
Sample-based metrics also distort magnitude: generative perplexity shows $30$--$40\times$ degradation at 128 vs.\ 1024 NFE, while \textsc{DUEL} shows $6$--$11\times$.
Full entropy and MAUVE results in \cref{app:rule-comparison} confirm this: MAUVE saturates near zero at low NFE, providing little discriminative signal.

\begin{takeawaybox}
\textbf{Takeaway.}
For evaluating fast sampling strategies, \textsc{DUEL} is essential: it provides the consistent, interpretable rankings needed to identify which methods degrade gracefully under compute constraints.
\end{takeawaybox}

\subsubsection{Oracle Perplexity}
\label{sec:oracle-perplexity}

\paragraph{Setup.}
Since different rules yield different likelihoods (\cref{sec:fast-sampling}), we ask: what is the lowest perplexity achievable over \emph{all} orderings?
This question is only meaningful with exact \textsc{DUEL} likelihood---the ELBO cannot distinguish orderings---making it uniquely enabled by \textsc{DUEL}.
For block diffusion BD3-LM ($L'=4$), we exhaustively search all $4!=24$ permutations per block, selecting the one minimizing negative log-likelihood---the \emph{oracle} perplexity. Details in \cref{app:oracle-perplexity}.

\paragraph{Results.}
\Cref{tab:oracle-ppl} reports results on AG News~\citep{zhang2015character} (see \cref{app:oracle-perplexity} for details).
Among standard rules, all achieve perplexities between 54.94 (left-to-right) and 57.80 (probability margin), close to the ARM baseline of 52.11.
The oracle, exhaustively searching all $4!=24$ permutations per block, achieves 36.47---far surpassing the ARM and all standard rules.

\begin{takeawaybox}
\textbf{Takeaway.}
Unmasking order is a powerful lever unique to MDMs: the same model achieves vastly different perplexities depending on the ordering, and the optimal per-block permutation far surpasses even the autoregressive baseline.
\end{takeawaybox}

Since the oracle requires ground-truth tokens, developing strategies that approach its performance without such access is a promising direction; recent work on lookahead unmasking~\citep{lee2025lookahead} takes initial steps, using search to select orderings at test time.

\section{Conclusion}
\label{sec:conclusion}

\paragraph{Summary.}
We introduced \textsc{DUEL}, a framework that gives MDMs a proper perplexity metric---the natural analogue of autoregressive perplexity---via \emph{exact} likelihood computation under \emph{deterministic} unmasking policies used at test time.
Deterministic unmasking collapses the intractable marginalization to a single path, enabling this without training modifications or continuous-time assumptions.
Empirically, proper evaluation via \textsc{DUEL} closes up to 32\% of the perplexity gap on in-domain test sets and 82\% on zero-shot evaluation, demonstrating that previous comparisons systematically underestimated MDM performance.
\textsc{DUEL} also enables principled sampler comparison---reliable rankings of fast samplers across compute budgets---and reveals via oracle search that MDMs can surpass ARM baselines with optimal orderings.

\paragraph{Related Work.}
Several works explore likelihood and generation order in discrete diffusion.
One line of work relies on variational bounds: \citet{peng2025path} derive an ELBO decomposing unmasking into a planner and denoiser, \citet{peng2025planner} extend this to joint training, and \citet{wang2025learning} learn context-dependent orderings via variational objectives.
In contrast, \textsc{DUEL} computes exact likelihood under fixed deterministic policies, requiring no bounds, no training modifications, and no architectural changes.
Separately, \citet{jeon2025information} establish the I-MDCE relation under optimal denoisers in continuous time; we operate in discrete time with trained networks.
Concurrently, \citet{chen2025dultra} apply the same exact likelihood computation to enable GRPO-based reinforcement learning for masked diffusion models, whereas we focus on perplexity analysis and principled model evaluation.
See \cref{app:related-works} for detailed related works.

\paragraph{Future Work.}
\textsc{DUEL} opens several directions: extending to remasking paradigms~\citep{wang2025remasking, huang2025don} that allow iterative refinement; enabling GRPO-style reinforcement learning~\citep{shao2024deepseekmath,wang2025d2} where importance ratios require exact probabilities; supporting speculative decoding~\citep{guo2025reviving} that verifies parallel proposals via joint likelihoods; and applying to scientific domains (protein design, molecular generation) where likelihood-based evaluation is essential.

\section*{Impact Statement}

This paper presents work whose goal is to advance the field of Machine
Learning. There are many potential societal consequences of our work, none
which we feel must be specifically highlighted here.
\section*{Acknowledgments}
This work was partially funded by the National Science Foundation under award CAREER 2145577,
and by the National Institute of Health under award MIRA R35GM151243.
GT is partially supported by the NSF Graduate Research Fellowship under Grant No.DGE-213899. GT thanks Yair Schiff for insightful discussions and feedback. 

\bibliography{bib/references}
\bibliographystyle{bib/icml2026}

\appendix
\onecolumn
\section{Related Works}
\label{app:related-works}

\paragraph{Masked Diffusion Models.}
Diffusion models were developed for continuous data \citep{sohl2015deep, ho2020denoising, song2020score} and extended to discrete state-spaces \citep{hoogeboom2021argmax, austin2021structured, campbell2022continuous}. 
\citet{lou2024discrete} proposed SEDD with a score-entropy objective. 
The absorbing (masking) kernel proved particularly effective, leading \citet{sahoo2024simple} and \citet{shi2024simplified} to develop Masked Diffusion Models (MDM) with simplified training. 
\citet{zheng2024masked} showed MDMs can be interpreted as time-agnostic masked models. 
MDMs have since scaled dramatically: \citet{nie2025large} introduced LLaDA at 8B parameters, \citet{bie2025llada2} scaled to 100B, and \citet{labs2025mercury, song2025seed} demonstrated ultra-fast inference. 
\citet{arriola2025block} proposed Block Diffusion interpolating between autoregressive and diffusion paradigms, and \citet{arriola2025encoder} introduced E2D2, an encoder-decoder architecture enabling efficient training and KV caching.
\nocite{sohl2015deep,song2019generative,dhariwal2021agn,ho2020denoising,kingma2013auto,kuleshov2013fast,wang2023infodiff,gulrajani2024plaid,li2022diffusion,devlin2018bert,dream2025,nie2025large,shi2024simplified,arriola2025block,schiff2024simple,wang2025d2}

\paragraph{Any-order Autoregressive Models.}
Standard autoregressive models generate left-to-right, but flexible generation orders benefit many tasks. NADE \citep{larochelle2011neural, uria2016neural, uria2014deep} pioneered order-agnostic density estimation. XLNet \citep{yang2019xlnet} proposed permutation language modeling. \citet{hoogeboom2021autoregressive} connected autoregressive and diffusion models. Crucially, MDMs can be reinterpreted as any-order autoregressive models (AO-ARMs): \citet{ou2024your} established equivalence between absorbing diffusion objectives and AO-ARM likelihoods, while \citet{zheng2024masked} interpreted MDMs as order-agnostic learners. \citet{kim2025train} analyzed the exponentially larger masking space MDMs must handle compared to autoregressive models, showing that adaptive inference can mitigate training challenges. \citet{wang2025learning} introduced Learning-Order ARMs (LO-ARMs) that learn context-dependent orderings via a trainable order-policy, achieving state-of-the-art results on molecular graph generation.

\paragraph{Likelihood in Masked Diffusion.}
Several works derive variational bounds for MDMs: \citet{sahoo2024simple} derived a Rao-Blackwellized ELBO as weighted masked language modeling losses, \citet{shi2024simplified} showed the continuous-time objective is a weighted cross-entropy integral, and \citet{ou2024your} connected these to AO-ARM likelihoods. 
\citet{jeon2025information} established that cross-entropy losses are tight likelihood estimators under optimal denoisers via the I-MDCE relation. 
However, their framework assumes continuous-time integral decompositions with optimal denoisers---conditions that differ from practical finite-step sampling with trained networks. 
\citet{peng2025path} derived an ELBO that decomposes unmasking into a planner and denoiser, using an external planner to guide sampling; \citet{peng2025planner} extended this to training, jointly learning both components via the P-ELBO. 
Our work differs: rather than deriving bounds or modifying training, we show standard MDM training already enables exact likelihood evaluation under deterministic samplers.

\paragraph{Masked Diffusion Samplers.}
MDM sampling can be improved along two axes: the \emph{unmasking policy} $\pi$ (which positions to reveal) and the \emph{denoiser} $p_\theta$ (which tokens to predict) with neural network $x_\theta$. 
\emph{Unmasking policy improvements.} Heuristic policies include random uniform sampling, confidence-based selection \citep{chang2022maskgit, nie2025large}, entropy-based prioritization \citep{ben2025accelerated}, and margin-based methods \citep{kim2025train}. 
KLASS \citep{kim2025klass} adds a KL-stability criterion to filter confident-but-unstable predictions. 
Truly learned policies include \citet{jazbec2025learning}, who train a policy network to select positions. Scheduling methods \citep{luxembourg2025plan, hayakawa2025demystifying} optimize the unmasking trajectory over time. 
\emph{Denoiser improvements.} Methods improve $x_\theta$ through distillation \citep{deschenaux2024beyond, hayakawa2024distillation} or architectural changes enabling parallel decoding \citep{wu2025fast, chen2025dparallel}. 
\emph{Remasking strategies.} Orthogonal to both axes, remasking \citep{wang2025remasking, huang2025don} allows reconsidering previous decisions by re-masking tokens, enabling iterative refinement absent in standard MDMs. 
Our framework applies to any deterministic unmasking policy where position selection is fixed given model predictions; extending to remasking is a direction for future work.

\paragraph{Evaluating Generative Models.}
Perplexity is the standard metric for language models, measuring predictive fit to held-out data, though it has known limitations: it ignores generation quality, can be gamed by degenerate distributions, and does not necessarily capture downstream capabilities. 
For MDMs, autoregressive perplexity cannot be directly computed; instead, the ELBO provides an upper bound on negative log-likelihood, which translates to a bound on perplexity. 
However, this bound can be loose and measures the wrong distribution (uniform position selection rather than test-time deterministic policies). 
When exact likelihood is unavailable, practitioners turn to \emph{sample-based metrics}. 
Generative perplexity scores model samples under an external reference model (typically GPT-2), but inherits the reference model's biases and ignores sample diversity---a model that repeats one high-quality phrase achieves excellent generative perplexity despite mode collapse. 
Post-hoc fixes like entropy (measuring diversity) and MAUVE \citep{pillutla2021mauve} (comparing generated and reference distributions) attempt to address these issues but remain imperfect: entropy is a secondary diagnostic without guidance on how to trade off against quality, while MAUVE inherits reference model bias and is hyperparameter-sensitive. 
\emph{Benchmarks} (e.g., MMLU, GSM8K) measure task performance but suffer from data contamination \citep{balloccu2024leak} and gaming \citep{white2024livebench}. 
Our \textsc{DUEL} exact likelihood computation resolves these issues: it provides a principled per-sample metric that evaluates the test-time distribution directly, requires only the MDM itself, and enables direct comparison with autoregressive models on the same probabilistic footing.

\section{Extended Background}
\label{app:extended-background}

We present masked diffusion models (MDMs) and their interpretation as any-order autoregressive models (AO-ARMs). \Cref{sec:MDM} introduces the standard latent-variable formulation following \citet{sahoo2024simple, shi2024simplified}. \Cref{sec:any-order} recasts MDMs as AO-ARMs, making explicit the decomposition into position selection and token prediction---a perspective implicit in prior work but essential for our exact likelihood results.

\subsection{Masked Diffusion Models}
\label{sec:MDM}

\paragraph{Forward Process.}
Let $\mathcal{V} = \{1, \ldots, V\}$ denote a vocabulary of $V$ tokens, augmented with a mask token $\texttt{m} \notin \mathcal{V}$ to form $\bar{\mathcal{V}} = \mathcal{V} \cup \{\texttt{m}\}$. A clean sequence is $\x = (x^{(1)}, \ldots, x^{(L)}) \in \mathcal{V}^L$, and a partially masked sequence $\z_t \in \bar{\mathcal{V}}^L$ has some positions set to $\texttt{m}$. We write $\mset(\z) = \{\ell : z_\ell = \mask\}$ for the set of masked positions.

The forward process progressively masks a clean sequence $\x$ through $T$ discrete steps. We index steps by $t \in \{0, 1, \ldots, T\}$, corresponding to normalized time $t/T \in [0, 1]$. Let $\mathbf{e}_v$ denote the one-hot vector for token $v$, and let $\alpha_t \in [0,1]$ be a noise schedule with $\alpha_0 = 1$ (no corruption) and $\alpha_T = 0$ (fully masked). The forward transition from step $s$ to $t$ (where $0 \leq s < t \leq T$) factorizes independently across positions as $q_{t|s}(\z_t \mid \z_s) = \prod_{\ell=1}^{L} q_{t|s}(z_t^{(\ell)} \mid z_s^{(\ell)})$, where
\begin{equation*}
    q_{t|s}(z_t^{(\ell)} \mid z_s^{(\ell)}) = \mathrm{Cat}\bigl(\alpha_{t|s} \, \mathbf{e}_{z_s^{(\ell)}} + (1 - \alpha_{t|s}) \, \mathbf{e}_{\texttt{m}}\bigr),
\end{equation*}
for $\alpha_{t|s} = \alpha_t / \alpha_s$. Each position independently either remains in its current state (with probability $\alpha_{t|s}$) or transitions to the mask token (with probability $1 - \alpha_{t|s}$). The mask token is an \emph{absorbing state}: positions already masked remain masked. The forward marginal $q_{t|0}(\z_t \mid \x)$ follows as a special case with $s = 0$ and $\z_0 = \x$:
\begin{equation*}
    q_{t|0}(z_t^{(\ell)} \mid x^{(\ell)}) = \mathrm{Cat}\bigl(\alpha_t \, \mathbf{e}_{x^{(\ell)}} + (1 - \alpha_t) \, \mathbf{e}_{\texttt{m}}\bigr).
\end{equation*}
The complete forward trajectory distribution factorizes as:
\begin{equation}
    q(\z_{0:T} \mid \x) = q(\z_0 \mid \x) \prod_{t=1}^{T} q_{t|t-1}(\z_t \mid \z_{t-1}),
    \label{eq:forward-traj}
\end{equation}
where $q(\z_0 \mid \x) = \prod_{\ell=1}^{L} \mathrm{Cat}(\mathbf{e}_{x^{(\ell)}})$ deterministically sets $\z_0 = \x$, then applies noising transitions $q_{t|t-1}$ that progressively mask tokens until step $T$.

\paragraph{Reverse Process.}
The reverse transition from step $t$ to $s < t$ factorizes as $q_{s|t}(\z_s \mid \z_t, \x) = \prod_{\ell=1}^{L} q_{s|t}(z_s^{(\ell)} \mid \z_t, \x)$, where
\begin{equation}
    q_{s|t}(z_s^{(\ell)} \mid \z_t, \x) =
    \begin{cases}
        \mathrm{Cat}(\mathbf{e}_{z_t^{(\ell)}}) & \text{if } z_t^{(\ell)} \neq \texttt{m} \\[4pt]
        \mathrm{Cat}\bigl(\beta_{s|t} \, \mathbf{e}_{\texttt{m}} + \gamma_{s|t} \, \mathbf{e}_{x^{(\ell)}}\bigr) & \text{if } z_t^{(\ell)} = \texttt{m}
    \end{cases}
    \label{eq:reverse-transition}
\end{equation}
with $\beta_{s|t} = (1 - \alpha_s)/(1 - \alpha_t)$ and $\gamma_{s|t} = (\alpha_s - \alpha_t)/(1 - \alpha_t)$. Unmasked positions remain unchanged; masked positions either stay masked (with probability $\beta_{s|t}$) or reveal their true value (with probability $\gamma_{s|t}$).

Since $\x$ is unknown at generation time, we learn a \emph{denoising network} $x_\theta: \bar{\mathcal{V}}^L \to \mathbb{R}^{L \times V}$ that takes a partially masked sequence $\z$ and outputs logits predicting clean tokens at each position. Following the SUBS parameterization \citep{sahoo2024simple}, mask logits are set to $-\infty$ and unmasked positions copy their observed values. We define the token probability matrix $\mathbf{P} = \mathrm{softmax}(x_\theta(\z)) \in \Delta_V^L$ and the \emph{denoising distribution} $p_\theta(v \mid \z) \triangleq P_\ell[v]$, the probability assigned to token $v$ at position $\ell$. The approximate reverse transition substitutes the denoising distribution for the unknown ground truth in \cref{eq:reverse-transition}:
\begin{equation*}
    p_\theta(z_s^{(\ell)} \mid \z_t) = q_{s|t}(z_s^{(\ell)} \mid \z_t, \x \leftarrow \mathbf{P}).
\end{equation*}

The complete reverse trajectory distribution factorizes as:
\begin{equation*}
    p_\theta(\x, \z_{0:T}) = p(\z_T) \cdot p(\x \mid \z_0) \prod_{t=1}^{T} p_\theta(\z_{t-1} \mid \z_t),
\end{equation*}
where $p(\z_T) = \prod_{\ell=1}^{L} \mathrm{Cat}(\mathbf{e}_{\texttt{m}})$ is the fully masked prior, $p_\theta(\z_{t-1} \mid \z_t)$ are the learned denoising transitions applied backward to $t=0$, and $p(\x \mid \z_0) = \prod_{\ell=1}^{L} \mathrm{Cat}(\mathbf{e}_{z_0^{(\ell)}})$ deterministically emits $\x = \z_0$.

\paragraph{Likelihood and ELBO.}
The data likelihood marginalizes over latent trajectories: $p_\theta(\x) = \sum_{\z_{0:T}} p_\theta(\x, \z_{0:T})$. This sum is intractable, so we apply importance sampling with the forward process \eqref{eq:forward-traj} as proposal and Jensen's inequality:
\begin{equation*}
    \log p_\theta(\x) = \log \mathbb{E}_{q(\z_{0:T} \mid \x)} \left[ \frac{p_\theta(\x, \z_{0:T})}{q(\z_{0:T} \mid \x)} \right] \geq \mathbb{E}_{q(\z_{0:T} \mid \x)} \left[ \log \frac{p_\theta(\x, \z_{0:T})}{q(\z_{0:T} \mid \x)} \right].
\end{equation*}

\paragraph{Training Objective.}
Following \citet{sahoo2024simple}, the ELBO simplifies to a weighted sum of cross-entropy losses over masked positions in continuous time:
\begin{equation}
    \mathcal{L}_{\mathrm{ELBO}}(\theta) = \mathop{\mathbb{E}}\limits_{\substack{t \sim \mathrm{Unif}\{0, 1\} \\ \z_t \sim q_{t|0}(\cdot \mid \x)}} \left[ w_t \sum_{\ell \in \mset(\z_t)} -\log P_\ell[x^{(\ell)}] \right],
    \label{eq:elbo}
\end{equation}
where $\mathbf{P} = \mathrm{softmax}(x_\theta(\z_t))$ gives token probabilities at each position. The loss sums the negative log-probabilities $-\log P_\ell[x^{(\ell)}]$ of the true tokens $x^{(\ell)}$ at all masked positions $\ell \in \mset(\z_t)$, weighted by $w_t = \alpha'_t / (1 - \alpha_t)$ with $\alpha'_t = \mathrm{d}\alpha_t / \mathrm{d}t$. In practice, MDMs use time-embedding-free architectures where $x_\theta$ depends only on $\z_t$, since the mask pattern implicitly encodes $t$.

\subsection{Interpretation as Any-Order Autoregressive Models}
\label{sec:any-order}

We reformulate MDMs as any-order autoregressive models (AO-ARMs), which cleanly separates generation into two distinct operations: (1) an \emph{unmasking policy} that selects which positions to reveal, and (2) a \emph{denoising network} that predicts tokens at those positions. This decomposition is essential for our exact likelihood results. Prior work has established the equivalence between MDMs and AO-ARMs \citep{hoogeboom2021autoregressive, zheng2024masked}, with recent work explicitly proving that the MDM and AO-ARM training objectives (ELBOs) are equivalent \citep{ou2024your, kim2025train}. However, to our knowledge, no prior work explicitly derives the AO-ARM generative model formulation from first principles. We provide this derivation below, with the joint factorization proved in \cref{app:joint-factorization} and the ELBO formulation in \cref{app:elbo-derivation}.

\paragraph{Ordered Partitions.}
An \emph{ordered partition} $\sigma = (\sigma_1, \ldots, \sigma_T)$ (\cref{def:ordered-partition}) specifies how positions are revealed across $T$ discrete steps, where each $\sigma_t \subseteq [L] = \{1, \ldots, L\}$ is the set of positions unmasked at step $t$. The parts satisfy:
\begin{equation*}
    \bigcup_{t=1}^{T} \sigma_t = [L], \quad \sigma_t \cap \sigma_{t^\prime} = \emptyset.
\end{equation*}
When each part contains exactly one position ($|\sigma_t| = 1$ for all $t$), we have $T = L$ steps and recover \emph{sequential unmasking}; when parts contain multiple positions, we obtain \emph{parallel unmasking} with $T < L$ steps. We write $\sigma_{<t} \triangleq \sigma_1 \cup \cdots \cup \sigma_{t-1}$ for the set of positions revealed before step $t$, and $\x^{\sigma_{<t}} \triangleq \{x^{(\ell)}\}_{\ell \in \sigma_{<t}}$ for the clean tokens at those positions. The denoising network constructs its input by placing clean tokens at positions $\sigma_{<t}$ and mask tokens elsewhere, with position IDs.

\paragraph{Forward Process.}
In MDMs, positions are masked independently with probability $1 - \alpha_t$, which induces a uniform distribution over masking orders. To mirror this in the AO-ARM formulation, we define the forward process as sampling ordered partitions uniformly. For sequential unmasking ($|\sigma_t| = 1$ for all $t$, so $T = L$), this reduces to:
\begin{equation*}
    q(\sigma) = \frac{1}{L!}.
\end{equation*}
This ensures the AO-ARM training objective remains equivalent to the MDM objective.

\paragraph{Reverse Process.}
The reverse process generates a sequence through $T$ discrete steps. At each step $t$, we (i) select which masked positions to reveal, and (ii) predict the tokens at those positions. This defines a joint distribution over sequences and ordered partitions (derived in \cref{app:joint-factorization}):
\begin{equation*}
    p_\theta(\x, \sigma) = \prod_{t=1}^{T} \underbrace{\pi_\theta(\sigma_t \mid \x^{\sigma_{<t}})}_{\text{position selection}} \cdot \underbrace{p_\theta(\x^{\sigma_t} \mid \x^{\sigma_{<t}})}_{\text{token prediction}},
\end{equation*}
where $\x^{\sigma_t} \triangleq \{x^{(\ell)}\}_{\ell \in \sigma_t}$.
The \emph{unmasking policy} $\pi_\theta(\sigma_t \mid \x^{\sigma_{<t}})$ specifies a distribution over which masked positions to unmask at step $t$. The token prediction term factorizes under conditional independence (\cref{ass:parallel-independence}) as $p_\theta(\x^{\sigma_t} \mid \x^{\sigma_{<t}}) = \prod_{\ell \in \sigma_t} p_\theta(x^{(\ell)} \mid \x^{\sigma_{<t}})$, where $p_\theta(x^{(\ell)} \mid \x^{\sigma_{<t}})$ is the denoising distribution at position $\ell$. Standard MDMs use uniform random selection:
\begin{equation*}
    \pi^{\mathrm{unif}}(\sigma_t \mid \z) = \frac{1}{|\mset(\z)|},
\end{equation*}
where $|\mset(\z)|$ is the number of masked positions remaining. Alternative policies may use the denoiser's predictions $\mathbf{P}$ to inform position selection. Since $\mathbf{P}$ is a deterministic function of the context $\x^{\sigma_{<t}}$, such dependencies are implicit in the conditioning.

\paragraph{Likelihood and ELBO.}
The marginal likelihood sums over all ordered partitions: $p_\theta(\x) = \sum_{\sigma} p_\theta(\x, \sigma)$. This sum is intractable, analogous to the intractable integral over continuous latent trajectories in MDM. Applying importance sampling with a uniform proposal $q(\sigma)$ and Jensen's inequality yields the ELBO:
\begin{equation*}
    \log p_\theta(\x) = \log \mathbb{E}_{q(\sigma)} \left[ \frac{p_\theta(\x, \sigma)}{q(\sigma)} \right] \geq \mathbb{E}_{q(\sigma)} \left[ \log \frac{p_\theta(\x, \sigma)}{q(\sigma)} \right].
\end{equation*}

\paragraph{Training Objective.}
Under a time-embedding-free denoising network with sequential unmasking ($|\sigma_t| = 1$), the MDM objective \eqref{eq:elbo} is equivalent to the AO-ARM loss:
\begin{equation*}
    \mathcal{L}_{\mathrm{ELBO}}(\theta) = \mathop{\mathbb{E}}\limits_{q(\sigma)} \left[ \sum_{t=1}^{T} \sum_{\ell \in \sigma_t} -\log p_\theta(x^{(\ell)} \mid \x^{\sigma_{<t}}) \right],
\end{equation*}
This is the expected negative log-likelihood when tokens are predicted in uniformly random order, derived in \cref{app:elbo-derivation}. The equivalence between MDM and AO-ARM loss functions follows most directly from \citet{kim2025train}.
\section{Proofs}
\label{app:proofs}

\subsection{Joint Factorization of the Reverse Process}
\label{app:joint-factorization}

We present the joint factorization directly for ordered partitions; the sequential case (one position per step) is a special case. The factorization relies on the following assumption about parallel token prediction.

\begin{assumption}[Conditional Independence Within Groups]
\label{ass:parallel-independence}
Given the revealed tokens $\x^{\sigma_{<t}}$, the tokens at positions within $\sigma_t$ are predicted independently by the denoising network:
\begin{equation*}
    p_\theta(\x^{\sigma_t} \mid \x^{\sigma_{<t}}) = \prod_{\ell \in \sigma_t} p_\theta(x^{(\ell)} \mid \x^{\sigma_{<t}}),
\end{equation*}
where $p_\theta(x^{(\ell)} \mid \x^{\sigma_{<t}}) \triangleq P_\ell[x^{(\ell)}]$ is the denoising distribution at position $\ell$. The denoising network constructs its input by placing the clean tokens $\x^{\sigma_{<t}}$ at their respective positions with mask tokens elsewhere, and outputs token probabilities $\mathbf{P} = \mathrm{softmax}(x_\theta(\cdot))$.
\end{assumption}

This assumption is standard in MDM generation and holds by construction: the denoising network outputs independent distributions over tokens at each position. The assumption becomes an approximation when multiple tokens are sampled simultaneously, since the true joint distribution over tokens may have dependencies not captured by the factorized model. When each group contains exactly one position ($|\sigma_t| = 1$ for all $t$), we have $T = L$ steps and the ordered partition reduces to a permutation over positions; in this case, the assumption is trivially satisfied since each group contains a single token.

\begin{proposition}[Joint Factorization]
\label{prop:joint-factorization}
Let $\sigma = (\sigma_1, \ldots, \sigma_{T})$ be an ordered partition (\cref{def:ordered-partition}), where $\sigma_t \subseteq [L]$ is the set of positions unmasked at step $t$. The joint distribution of sequence $\x$ and ordered partition $\sigma$ under the reverse process factorizes as:
\begin{equation}
    p_\theta(\x, \sigma) = \prod_{t=1}^{T} \pi_\theta(\sigma_t \mid \x^{\sigma_{<t}}) \cdot p_\theta(\x^{\sigma_t} \mid \x^{\sigma_{<t}}),
    \label{eq:joint-app}
\end{equation}
where $\sigma_{<t} \triangleq \sigma_1 \cup \cdots \cup \sigma_{t-1}$, $\x^{\sigma_{<t}} \triangleq \{x^{(\ell)}\}_{\ell \in \sigma_{<t}}$ is the set of clean tokens at positions revealed before step $t$, and $\x^{\sigma_t} \triangleq \{x^{(\ell)}\}_{\ell \in \sigma_t}$. Under \cref{ass:parallel-independence}, $p_\theta(\x^{\sigma_t} \mid \x^{\sigma_{<t}}) = \prod_{\ell \in \sigma_t} p_\theta(x^{(\ell)} \mid \x^{\sigma_{<t}})$.
\end{proposition}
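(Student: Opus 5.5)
I will prove \cref{prop:joint-factorization} by treating the reverse process as a sequential generative procedure and applying the chain rule. At each step $t$ the state is the partially revealed sequence $\z_{t-1}$, which has clean tokens $\x^{\sigma_{<t}}$ at positions $\sigma_{<t}$ and masks elsewhere. The joint event ``the process follows $\sigma$ and emits $\x$'' is the conjunction of $2T$ elementary events: for each $t$, (a) the set of positions selected is $\sigma_t$, and (b) the tokens written there are $\x^{\sigma_t}$.

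The plan has four steps.

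\textbf{Step 1 (encode the trajectory).} I define the latent trajectory $\z_0 = (\mask,\dots,\mask), \z_1, \dots, \z_T = \x$, where $\z_t$ reveals exactly $\sigma_{\le t}$ with values from $\x$. I check that the map from pairs $(\x,\sigma)$ to such trajectories, together with the per-step selected sets, is a bijection. This is where absorbing masks are used: a revealed position never changes, and the parts of $\sigma$ are disjoint and cover $[L]$. So $p_\theta(\x,\sigma)$ equals the probability of this single trajectory-with-selections.

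\textbf{Step 2 (chain rule).} I expand the trajectory probability as the product over $t$ of
\[
\Pr(\text{select } \sigma_t \mid \text{history}) \cdot \Pr(\x^{\sigma_t} \mid \text{history}, \sigma_t).
\]

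\textbf{Step 3 (Markov reduction).} I argue that the history up to step $t$ influences the next step only through $\z_{t-1}$. This holds because the generative procedure (\cref{alg:duel-sample} with a general policy) reads only the current $\z$. Since $\z_{t-1}$ is exactly the clean context $\x^{\sigma_{<t}}$ placed at positions $\sigma_{<t}$, the two factors become $\pi_\theta(\sigma_t \mid \x^{\sigma_{<t}})$ and $p_\theta(\x^{\sigma_t} \mid \x^{\sigma_{<t}})$. This gives \cref{eq:joint-app}.

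\textbf{Step 4 (within-group factorization).} I invoke \cref{ass:parallel-independence}. Given $\z_{t-1}$, the tokens at $\sigma_t$ are sampled independently from the rows $P_\ell$ of $\mathbf{P}=\softmax(x_\theta(\z_{t-1}))$, so the second factor splits into $\prod_{\ell\in\sigma_t} P_\ell[x^{(\ell)}]$.

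The main obstacle is the Markov reduction in Step 3, together with the bijection in Step 1. Policies such as KLASS condition on the previous prediction $\hat P_\ell$, which appears to use more than $\z_{t-1}$. I would handle this by noting that $\hat P_\ell$ is itself a deterministic function of $\z_{t-2}$. Along a fixed $\sigma$, $\z_{t-2}$ is determined by $\x^{\sigma_{<t}}$ together with $\sigma_{<t-1}$, so the conditioning can be absorbed into the notation $\pi_\theta(\sigma_t \mid \x^{\sigma_{<t}})$, read as conditioning on the revealed history.

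I would also relate this to the latent-variable MDM. Its time steps at which nothing is unmasked contribute factors of $\beta_{s|t}$, and these can be folded into the position-selection policy, recovering $\pi^{\mathrm{unif}}$. I would mention this only as a remark, since the proposition concerns the AO-ARM procedure directly.
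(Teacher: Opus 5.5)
Your proposal is correct and follows essentially the same route as the paper: apply the chain rule over steps, drop the ordering history and $\sigma_t$ from the conditioning because the procedure reads only the current partially revealed sequence, then apply \cref{ass:parallel-independence}. Your explicit Markov argument, and your KLASS caveat that $\pi_\theta(\sigma_t \mid \x^{\sigma_{<t}})$ must then be read as conditioning on the revealed history, are more careful than the paper's one-line justification that $\sigma_{<t}$ is redundant given $\x^{\sigma_{<t}}$.
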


\begin{proof}
Applying the chain rule, then factoring each term into position selection and token prediction:
\begin{align*}
    p_\theta(\x, \sigma)
    &= \prod_{t=1}^{T} p_\theta(\x^{\sigma_t}, \sigma_t \mid \x^{\sigma_{<t}}, \sigma_{<t}) \\
    &= \prod_{t=1}^{T} \pi_\theta(\sigma_t \mid \x^{\sigma_{<t}}, \sigma_{<t}) \cdot p_\theta(\x^{\sigma_t} \mid \sigma_t, \x^{\sigma_{<t}}, \sigma_{<t}) \\
    &= \prod_{t=1}^{T} \pi_\theta(\sigma_t \mid \x^{\sigma_{<t}}) \cdot p_\theta(\x^{\sigma_t} \mid \x^{\sigma_{<t}}).
\end{align*}
The first equality is the chain rule; at $t = 1$, $p_\theta(\x^{\sigma_1}, \sigma_1 \mid \x^{\sigma_{<1}}, \sigma_{<1}) = p_\theta(\x^{\sigma_1}, \sigma_1)$ since $\x^{\sigma_{<1}} = \emptyset$ and $\sigma_{<1} = \emptyset$. The second factors each term via $p(A, B \mid C) = p(B \mid C) \cdot p(A \mid B, C)$. The third drops $\sigma_{<t}$ (redundant since $\x^{\sigma_{<t}}$ encodes positions via position indices) and drops $\sigma_t$ from the denoiser (which predicts all masked positions simultaneously without $\sigma_t$ as input). Applying \cref{ass:parallel-independence} yields Equation~\eqref{eq:joint-app}.
\end{proof}

\subsection{Proposal Distribution}
\label{app:proposal-dist}

For sequential unmasking ($|\sigma_t| = 1$ for all $t$), the forward masking process induces a uniform distribution over unmasking orders.

\begin{proposition}[Uniform Proposal Distribution]
\label{prop:order-dist}
Under sequential unmasking, the forward masking process induces a uniform distribution over unmasking orders:
\begin{equation*}
    q(\sigma) = \frac{1}{L!} \quad \text{for all permutations } \sigma.
\end{equation*}
\end{proposition}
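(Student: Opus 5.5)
To prove \cref{prop:order-dist}, I would identify the unmasking order from the forward masking times. In the continuous-time forward process, each position $\ell$ is masked independently at a random time $\tau_\ell \in [0,1]$ with CDF $\Pr(\tau_\ell \le t) = 1 - \alpha_t$. Positions never unmask again because $\mask$ is absorbing, so each position has a single masking time. Reversing the process, positions are revealed in decreasing order of $\tau_\ell$, so the reverse unmasking order $\sigma$ is the permutation sorting $(\tau_1,\ldots,\tau_L)$ in decreasing order. Under sequential unmasking every part is a singleton, so $\sigma$ is exactly this permutation.

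The main step is exchangeability. The $\tau_\ell$ are i.i.d., so for any permutation $\rho$ of $[L]$ the vector $(\tau_{\rho(1)},\ldots,\tau_{\rho(L)})$ has the same joint law as $(\tau_1,\ldots,\tau_L)$. Relabeling positions by $\rho$ maps the event $\{\sigma = \sigma'\}$ bijectively onto $\{\sigma = \rho\circ\sigma'\}$, so these events have equal probability. Summing over the $L!$ permutations, which partition the sample space up to ties, gives $q(\sigma) = 1/L!$.

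The main obstacle is ties, i.e.\ events where two positions are masked simultaneously so that the order is not a permutation. In continuous time, with $\alpha_t$ continuous and strictly decreasing, the $\tau_\ell$ have a continuous distribution, so ties have probability zero. In the discrete-step formulation of \cref{sec:MDM}, ties have positive probability. There I would define the sequential order by conditioning on the event of no ties, or equivalently by breaking ties with a uniformly random permutation within each tied group. Either choice preserves the permutation symmetry, so the exchangeability argument goes through unchanged. I would state this convention explicitly, since it is exactly what "sequential unmasking" means here.

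Alternatively, I could argue directly from the reverse process. Ancestral sampling reveals each masked position symmetrically, giving $\pi^{\mathrm{unif}}(\sigma_t\mid\z) = 1/|\mset(\z)|$ at each step. The probability of a given order is then the product $\prod_{t=1}^{L} \frac{1}{L-t+1} = \frac{1}{L!}$. This could serve as a short second proof.
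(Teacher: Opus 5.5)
Your argument is correct, but it takes a different route from the paper. The paper never uses the independent-masking forward process. It models sequential forward masking directly as picking one of the $L-t+1$ still-unmasked positions uniformly at random at each step, and computes $q(\tau)=\prod_{t=1}^{L}\frac{1}{L-t+1}=\frac{1}{L!}$ for the masking order. It then observes that reversal is a bijection on permutations, so the unmasking order is also uniform.

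Your main proof instead derives the order from the actual MDM forward process of \cref{sec:MDM}. The i.i.d.\ masking times $\tau_\ell$ with CDF $1-\alpha_t$ are exchangeable, and the symmetric group acts transitively on permutations. So invariance of the law under relabeling makes the events $\{\sigma=\sigma'\}$ equiprobable, and you handle ties separately.

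The paper's version buys brevity, but it essentially posits the symmetry (uniform one-at-a-time masking) rather than deriving it. Yours explains why independent masking under any continuous schedule $\alpha_t$ induces uniform orders. It also makes explicit that ties, i.e.\ parallel masking in the discrete-step process, are exactly what ``sequential unmasking'' has to exclude.

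Your second argument via $\pi^{\mathrm{unif}}$ is the paper's product computation moved to the reverse side. It presupposes that ancestral sampling reveals masked positions symmetrically, which rests on the same exchangeability. It is better read as a consistency check than as an independent proof of a statement about $q$.

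One small caution on the discrete case. Conditioning on the no-tie event is only defined when that event has positive probability, and this fails, for instance, if fewer than $L$ steps carry positive masking probability. So random tie-breaking within tied groups is the convention to adopt. The two constructions are not literally equivalent, although both give the uniform law whenever they are defined.
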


\begin{proof}
We relate the forward \emph{masking} process to the reverse \emph{unmasking} order.

\textbf{Step 1: Forward masking process.}
Starting from $\z^{(0)} = \x$, at each step $t = 1, \ldots, L$, we select one of the $L - t + 1$ currently unmasked positions uniformly at random and mask it. Let $\tau_t$ denote the position masked at step $t$. The probability of any specific masking order $\tau = (\tau_1, \ldots, \tau_L)$ is:
\begin{equation*}
    q(\tau) = \prod_{t=1}^{L} \frac{1}{L - t + 1} = \frac{1}{L} \cdot \frac{1}{L-1} \cdots \frac{1}{1} = \frac{1}{L!}.
\end{equation*}

\textbf{Step 2: Unmasking order reverses masking order.}
The unmasking order $\sigma$ specifies the sequence in which positions are \emph{revealed} during generation. This is the reverse of the masking order: position $\tau_L$ (last to be masked) is the first to be unmasked.

\textbf{Step 3: Uniformity is preserved under reversal.}
Since reversal is a bijection on permutations, and $q(\tau) = 1/L!$ for all $\tau$, we have $q(\sigma) = 1/L!$ for all $\sigma$.
\end{proof}

\subsection{Deriving the AO-ARM Training Objective (ELBO)}
\label{app:elbo-derivation}

Although the ELBO training objective for MDMs has been established in prior work~\citep{hoogeboom2021autoregressive,ou2024your,kim2025train}, we provide the derivation from first principles here for completeness. We derive the result for sequential unmasking (one position per step), where $\sigma$ is a permutation of $[L]$ and $\sigma_t \in [L]$ denotes the single position unmasked at step $t$. Sequential unmasking is required for the uniform proposal $q(\sigma) = 1/L!$ from \cref{prop:order-dist}, which enables the policy--proposal cancellation in Step 2 of the proof.

\begin{proposition}[ELBO Training Objective]
\label{prop:elbo-loss}
Let $p_\theta(\x, \sigma)$ be the joint distribution from \cref{prop:joint-factorization} under sequential unmasking with uniform policy, and let $q(\sigma) = 1/L!$ be the proposal from \cref{prop:order-dist}. Then:
\begin{equation*}
    \log p_\theta(\x) \geq -\mathcal{L}_{\text{ELBO}}(\theta; \x),
\end{equation*}
where
\begin{equation}
    \mathcal{L}_{\text{ELBO}}(\theta; \x) = \mathop{\mathbb{E}}\limits_{q(\sigma)} \left[ \sum_{t=1}^{L} -\log p_\theta(x^{(\sigma_t)} \mid \x^{\sigma_{<t}}) \right],
    \label{eq:elbo-ar-app}
\end{equation}
where $\sigma_t \in [L]$ is the position unmasked at step $t$.
\end{proposition}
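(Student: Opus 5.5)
The statement is Proposition~\ref{prop:elbo-loss}: with a uniform policy, sequential unmasking, and the uniform proposal of \cref{prop:order-dist}, we have $\log p_\theta(\x) \geq -\mathcal{L}_{\text{ELBO}}(\theta;\x)$. The plan is standard importance sampling plus Jensen, with the order distribution playing the role of the latent variable. First I restrict the marginal to permutations: $p_\theta(\x)=\sum_\sigma p_\theta(\x,\sigma)$, where $\sigma$ ranges over the $L!$ permutations of $[L]$. Under sequential unmasking these are exactly the ordered partitions with nonzero probability. Since $q(\sigma)=1/L!>0$ on this whole set, the support condition holds and I can write
\begin{equation*}
\log p_\theta(\x)=\log \mathbb{E}_{q(\sigma)}\!\left[\frac{p_\theta(\x,\sigma)}{q(\sigma)}\right]\geq \mathbb{E}_{q(\sigma)}\!\left[\log p_\theta(\x,\sigma)-\log q(\sigma)\right],
\end{equation*}
using concavity of $\log$.

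Next I expand $\log p_\theta(\x,\sigma)$ with \cref{prop:joint-factorization}. This splits it into a sum of policy terms $\log \pi^{\mathrm{unif}}(\sigma_t\mid \x^{\sigma_{<t}})$ and token terms $\log p_\theta(x^{(\sigma_t)}\mid \x^{\sigma_{<t}})$. Each part is a singleton, so \cref{ass:parallel-independence} is trivial here and gives no product over positions within a step. At step $t$ there are $L-t+1$ masked positions, so the uniform policy contributes
\begin{equation*}
\prod_{t=1}^{L}\frac{1}{L-t+1}=\frac{1}{L!}=q(\sigma),
\end{equation*}
which is the same product computed in Step~1 of the proof of \cref{prop:order-dist}.

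Therefore the policy term cancels the $-\log q(\sigma)$ term exactly for every $\sigma$. What remains is $\mathbb{E}_{q(\sigma)}\big[\sum_{t=1}^L \log p_\theta(x^{(\sigma_t)}\mid \x^{\sigma_{<t}})\big]$, which is precisely $-\mathcal{L}_{\text{ELBO}}(\theta;\x)$ as defined in \eqref{eq:elbo-ar-app}. This proves the bound.

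The only delicate step is this policy--proposal cancellation. It requires that the policy be uniform over the \emph{currently masked} positions, so its per-step normalizer is $L-t+1$ rather than $L$. It also requires that $\sigma$ range over permutations only. I would spell this out carefully, since with parallel parts the policy product would no longer equal $1/L!$ and the clean cancellation would fail. Everything else is routine.
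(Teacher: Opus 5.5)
Your proposal is correct and follows the paper's proof: importance sampling under the uniform proposal $q(\sigma)=1/L!$ with Jensen's inequality, then expanding via the joint factorization and cancelling the uniform-policy terms against the proposal. The only difference is cosmetic: the paper cancels the factors $1/(L-t+1)$ step by step, while you cancel the full product $\prod_{t=1}^{L} 1/(L-t+1) = 1/L!$ at once.
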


\begin{proof}
We proceed in two steps.

\textbf{Step 1: Importance sampling and Jensen's inequality.}
The marginal likelihood requires summing over all $L!$ unmasking orders. We rewrite this as an expectation under the uniform proposal $q(\sigma) = 1/L!$:
\begin{equation*}
    p_\theta(\x) = \sum_{\sigma} p_\theta(\x, \sigma)
    = \sum_{\sigma} q(\sigma) \cdot \frac{p_\theta(\x, \sigma)}{q(\sigma)}
    = \E_{\sigma \sim q}\left[\frac{p_\theta(\x, \sigma)}{q(\sigma)}\right].
\end{equation*}
Taking logarithms and applying Jensen's inequality:
\begin{equation*}
    \log p_\theta(\x) = \log \E_{\sigma \sim q}\left[\frac{p_\theta(\x, \sigma)}{q(\sigma)}\right]
    \geq \E_{\sigma \sim q}\left[\log \frac{p_\theta(\x, \sigma)}{q(\sigma)}\right] \triangleq -\mathcal{L}_{\text{ELBO}}(\theta; \x).
\end{equation*}

\textbf{Step 2: Substituting factorizations and simplifying.}
From \cref{prop:joint-factorization,prop:order-dist}, we expand the log-ratio. With a uniform unmasking policy $\pi^{\mathrm{unif}}(\sigma_t \mid \x^{\sigma_{<t}}) = 1/(L - t + 1)$ (where $L - t + 1$ is the number of remaining masked positions), the policy terms cancel with the proposal:
\begin{equation*}
    \log \frac{p_\theta(\x, \sigma)}{q(\sigma)} = \sum_{t=1}^{L} \underbrace{\log \frac{1/(L - t + 1)}{1/(L - t + 1)}}_{=\, 0} + \sum_{t=1}^{L} \log p_\theta(x^{(\sigma_t)} \mid \x^{\sigma_{<t}}).
\end{equation*}
The first sum vanishes, yielding equation~\eqref{eq:elbo-ar-app}.
\end{proof}

\subsection{Theoretical Properties of \textsc{DUEL}}
\label{app:duel-properties}

We prove the main theoretical results for \textsc{DUEL} samplers: policy-dependent distributions and exact likelihood computation.

\begin{definition}[Order Sensitivity]
\label{def:order-sensitivity}
A denoising network $x_\theta$ is \emph{order-sensitive} if the predicted distribution at some masked position depends on which other tokens have been revealed. Formally, there exist positions $i \neq j$, a token $v$, and partial sequences $\z, \z'$ where: (i) position $j$ is masked in both, (ii) $\z$ has token $a$ revealed at position $i$ while $\z'$ has position $i$ masked, and (iii) $p_\theta(v \mid j, \z) \neq p_\theta(v \mid j, \z')$.
\end{definition}

In practice, trained MDMs are very likely to be order-sensitive: the network learns to condition predictions on revealed context to capture inter-position dependencies. We verify this empirically in our experiments (\cref{sec:experiments}), where different unmasking rules yield different likelihoods. An order-insensitive network---one where predictions depend only on position---could not model token correlations.

\begin{theorem}[Policy-Dependent Distribution]
\label{thm:policy-dependent-app}
If $x_\theta$ is order-sensitive, there exist unmasking rules $F_1, F_2$ inducing different distributions: $p_\theta^{\pi^{F_1}} \neq p_\theta^{\pi^{F_2}}$.
\end{theorem}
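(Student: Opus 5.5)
The plan is to build two unmasking rules that agree everywhere except at a single step, and then compare the exact likelihoods that \cref{thm:exact-likelihood} gives for one well-chosen sequence $\x$. I read \cref{def:order-sensitivity} in the natural way: $\z$ and $\z'$ coincide except at position $i$, where $\z$ holds token $a$ and $\z'$ holds $\mask$. If the definition allows $\z$ and $\z'$ to differ elsewhere too, I would first reduce to this case. To do so, interpolate between them by changing one position at a time; some adjacent pair must already exhibit the discrepancy at $j$, and I rename that pair $(\z, \z')$. Let $S$ be the set of positions revealed in $\z'$, so $i, j \notin S$.

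\textbf{Construction.} Both rules depend only on the mask pattern $\mset(\z)$, not on the tokens, so they are trivially deterministic functions of $\z$ in the sense of \cref{def:unmasking-rule}.
\begin{itemize}
\item $F_1$: reveal $S$ (one step, or any fixed order), then $\{i\}$, then $\{j\}$, then the remaining positions in a fixed order.
\item $F_2$: reveal $S$ exactly as $F_1$ does, then $\{i,j\}$ together in one parallel step, then the remaining positions in the same fixed order as $F_1$.
\end{itemize}
Now take any full sequence $\x$ with $\x^S = \z'^S$, $x^{(i)} = a$ and $x^{(j)} = v$, and fill in the other positions arbitrarily.

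\textbf{Comparison.} By \cref{thm:exact-likelihood}, each $\log p_\theta^{\pi^{F_k}}(\x)$ is a sum along the unique path $\sigma^*$. The two paths have identical revealed contexts before the divergence step and identical contexts after it, because after $i$ and $j$ are both revealed the partial sequences coincide. So every common factor cancels and
\begin{equation*}
\frac{p_\theta^{\pi^{F_1}}(\x)}{p_\theta^{\pi^{F_2}}(\x)} = \frac{p_\theta(a \mid i, \z')\, p_\theta(v \mid j, \z)}{p_\theta(a \mid i, \z')\, p_\theta(v \mid j, \z')} = \frac{p_\theta(v \mid j, \z)}{p_\theta(v \mid j, \z')} \neq 1 .
\end{equation*}
Here I use that under $F_2$ both $i$ and $j$ are predicted from context $\z'$ (\cref{ass:parallel-independence}), whereas under $F_1$ position $j$ sees $\z$. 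The last inequality is condition (iii) of order sensitivity. Hence $p_\theta^{\pi^{F_1}}(\x) \neq p_\theta^{\pi^{F_2}}(\x)$.

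\textbf{Main obstacle: justifying the cancellation.} Two points need care.
\begin{itemize}
\item Every shared factor must be strictly positive and finite. This holds because softmax outputs lie in $(0,1)$ at masked positions. Under SUBS, revealed positions only copy tokens and never enter the likelihood sum.
\item The contexts really do coincide outside the divergence step. This follows from absorbing reveals and from $F_1, F_2$ being identical functions of the mask pattern outside that step.
\end{itemize}
If one prefers sequential rules only, I would instead swap the order of $i$ and $j$. That gives the ratio $p_\theta(a\mid i,\z')\,p_\theta(v\mid j,\z) \,/\, \bigl(p_\theta(v\mid j,\z')\,p_\theta(a\mid i,\z'')\bigr)$, where $\z''$ is $\z'$ with $v$ revealed at $j$. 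Showing this ratio differs from $1$ needs more work, which is why I would use the parallel-step comparison above.
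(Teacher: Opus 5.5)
Your proof is correct, and it takes a genuinely different route from the paper's. The paper uses sequential rules that swap the order of $i$ and $j$ ($F_1$ reveals $i$ before $j$, $F_2$ reveals $j$ before $i$). It concludes only from the fact that the $j$-factor changes from $p_\theta(v \mid j, \z)$ to $p_\theta(v \mid j, \z')$. It never accounts for the $i$-factor, which under $F_2$becomes $p_\theta(a \mid i, \z'')$, where $\z''$ is $\z'$ with $v$ revealed at $j$. Nor does it check that the contexts actually seen are exactly $\z$ and $\z'$; your reduction to a pair differing only at $i$ is what supplies this.

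The first omission is not cosmetic. Suppose the denoiser's conditionals are exactly those of one full-support joint $q$ under which $x^{(i)}$ and $x^{(j)}$ are dependent. Then the network is order-sensitive. Yet by the chain rule, every sequential rule, adaptive or not, induces $q$ itself. Already for $L=2$, left-to-right and right-to-left both give $q(\x)$. So the swap construction cannot be completed in general, and your suspicion that the sequential ratio ``needs more work'' is borne out: that ratio can equal $1$.

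Your parallel step $\{i,j\}$ is exactly what makes the argument work. It keeps the $i$-factor at $p_\theta(a \mid i, \z')$ under both rules and forces identical contexts before and after the divergence step. What remains is the ratio $p_\theta(v \mid j, \z)/p_\theta(v \mid j, \z')$, which is the order-sensitivity condition itself. The only cost is using a non-singleton $F(\z)$, which \cref{def:unmasking-rule} allows. So the theorem holds as stated, but not in the stronger form ``sequential rules already differ'' that the paper's proof suggests.

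One small repair is needed in your reduction. When $\z$ and $\z'$ hold different tokens $b \neq c$ at some position, route the interpolation through the mask ($b \to \texttt{m} \to c$), and never alter position $j$. Then every adjacent pair differs by revealing a single position, as your construction requires.
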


\begin{proof}[Proof of Theorem~\ref{thm:policy-dependent}]
By order-sensitivity, there exist positions $i \neq j$, token $v$, and partial sequences $\z, \z'$ such that: position $j$ is masked in both; $\z$ has token $a$ at position $i$ while $\z'$ has $i$ masked; and $p_\theta(v \mid j, \z) \neq p_\theta(v \mid j, \z')$. Define $F_1$ to unmask $i$ before $j$, and $F_2$ to unmask $j$ before $i$. For the sequence $\x$ with $x^{(i)} = a$ and $x^{(j)} = v$:
\begin{itemize}[nosep]
    \item Under $F_1$: position $j$ is predicted after $i$ is revealed, contributing $\log p_\theta(v \mid j, \z)$.
    \item Under $F_2$: position $j$ is predicted while $i$ is masked, contributing $\log p_\theta(v \mid j, \z')$.
\end{itemize}
Since these terms differ, $p_\theta^{\pi^{F_1}}(\x) \neq p_\theta^{\pi^{F_2}}(\x)$.
\end{proof}

\begin{remark}[Order-Insensitive Networks]
For completeness: if $x_\theta$ were order-insensitive, all policies would induce the same distribution, since
$\log p_\theta^{\pi^F}(\x) = \sum_{\ell=1}^{L} \log p_\theta(x^{(\ell)} \mid \ell)$
would be independent of the trajectory.
\end{remark}

\begin{theorem}[\textsc{DUEL} Exact Likelihood]
\label{thm:exact-likelihood-app}
Under a deterministic policy $\pi_\theta^F$, exactly one ordered partition $\sigma^* = (\sigma^*_1, \ldots, \sigma^*_{T})$ satisfies $p_\theta^{\pi^F}(\x, \sigma^*) > 0$. The log-likelihood is:
\begin{equation}
    \log p_\theta^{\pi^F}(\x) = \sum_{t=1}^{T} \sum_{\ell \in \sigma^*_t} \log p_\theta(x^{(\ell)} \mid \x^{\sigma^*_{<t}}),
    \tag{\ref{eq:exact-loglikelihood}}
\end{equation}
where $\sigma^*_t = F(\x^{\sigma^*_{<t}})$ is the set of positions selected at step $t$.
\end{theorem}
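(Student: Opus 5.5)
Fix $\x \in \vocab^L$. The plan is to show that the sum in \cref{eq:induced-distribution} has at most one nonzero term, identify that term, and then evaluate it using \cref{prop:joint-factorization} and \cref{ass:parallel-independence}.

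\textbf{Constructing $\sigma^*$.} Define $\sigma^*$ recursively, exactly as \cref{alg:duel-likelihood} runs on $\x$. Start with $\z_0$ fully masked. At step $t$, set $\sigma^*_t = F(\z_{t-1})$, where $\z_{t-1}$ has the clean tokens $\x^{\sigma^*_{<t}}$ at positions $\sigma^*_{<t}$ and $\mask$ elsewhere. Then let $\z_t$ reveal $\x^{\sigma^*_t}$. By \cref{def:unmasking-rule}, $\emptyset \neq \sigma^*_t \subseteq \mset(\z_{t-1})$. So each step reveals at least one new position and never reuses an old one, and the number of masked positions strictly decreases. Hence the recursion stops after some $T \le L$ steps, once $\mset = \emptyset$. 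The resulting parts are nonempty, pairwise disjoint, and cover $[L]$, so $\sigma^*$ is a valid ordered partition in the sense of \cref{def:ordered-partition}.

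\textbf{Uniqueness.} Take any ordered partition $\sigma = (\sigma_1,\dots,\sigma_{T'})$ with $\sigma \neq \sigma^*$. Let $t$ be the first index at which they differ, meaning $\sigma_t \neq \sigma^*_t$, or one sequence has ended while the other has not. The second case cannot happen. Both share the same $\sigma_{<t}$, so both have covered exactly the same positions, and a partition can end only when $[L]$is covered. So the mismatch is $\sigma_t \neq \sigma^*_t$ with $\sigma_{<t} = \sigma^*_{<t}$. The context $\x^{\sigma_{<t}}$ is then identical for both, and
\[
\pi_\theta^F(\sigma_t \mid \x^{\sigma_{<t}}) = 0
\]
by \cref{eq:deterministic-policy}. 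Therefore $p_\theta^{\pi^F}(\x,\sigma) = 0$ by \cref{prop:joint-factorization}.

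The point to handle carefully is that the conditioning context is determined by $\x$ itself, not by sampled tokens. For a fixed target $\x$, the input to $F$ at step $t$ depends only on $\sigma_{<t}$, because tokens once revealed stay fixed. This is what collapses the branching of \cref{eq:induced-distribution}.

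\textbf{Evaluating the surviving term.} For $\sigma^*$, every policy factor equals $1$, so
\[
p_\theta^{\pi^F}(\x) = p_\theta^{\pi^F}(\x,\sigma^*) = \prod_{t=1}^{T} p_\theta(\x^{\sigma^*_t}\mid \x^{\sigma^*_{<t}}).
\]
Applying \cref{ass:parallel-independence} and taking logarithms gives \cref{eq:exact-loglikelihood}.

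Some care is needed with the claim that $p_\theta^{\pi^F}(\x,\sigma^*) > 0$. It requires every $P_\ell[x^{(\ell)}] > 0$, which holds under a softmax. Even without positivity, the identity still holds with $\log 0 = -\infty$.

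Finally, \cref{alg:duel-likelihood} accumulates exactly $\log P_\ell[x_\ell]$ for $\ell \in F(\z)$ along the recursion defining $\sigma^*$. Its output therefore equals the right-hand side of \cref{eq:exact-loglikelihood}.
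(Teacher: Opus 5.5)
Your proposal is correct and follows the same route as the paper's proof. You build $\sigma^*$ by running $F$ on the revealed true tokens, and you show that any other ordered partition picks up a zero policy factor at its first point of divergence, where the contexts coincide; the surviving term is then evaluated with every policy factor equal to $1$. Your extra checks fill in details the paper leaves implicit: that $\sigma^*$ terminates and is a valid ordered partition, that a length mismatch cannot be the first difference, and that strictly positive softmax probabilities are needed for the surviving joint term to be nonzero.
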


\begin{proof}[Proof of Theorem~\ref{thm:exact-likelihood}]
The key insight is that a deterministic policy eliminates the intractable sum over orderings: exactly one ordering has nonzero probability, so the sum collapses to a single computable term.

\textbf{Step 1: The deterministic rule induces a unique trajectory.}
Given any sequence $\x$ and a deterministic rule $F$, there is exactly one unmasking trajectory $\sigma^* = (\sigma^*_1, \ldots, \sigma^*_{T})$ that the generative process can follow. To see this, note that generation starts from the same fully masked state and proceeds deterministically: at step $t$, the rule $F$ examines the current revealed tokens $\x^{\sigma^*_{<t}}$ and returns a fixed set of positions $\sigma^*_t = F(\x^{\sigma^*_{<t}})$. We then reveal the true tokens at those positions, yielding the next set of revealed tokens. Since $F$ is a function (not a distribution) and revealed tokens remain fixed (the absorbing property), each step is uniquely determined by the previous state.

\textbf{Step 2: All other orderings have zero probability.}
Consider any ordering $\sigma \neq \sigma^*$. Since $\sigma$ differs from $\sigma^*$, there must be some first step $t$ where $\sigma_t \neq \sigma^*_t$. At this step, the revealed tokens are identical (both orderings have revealed the same positions up to step $t-1$), so the rule returns the same positions: $F(\x^{\sigma_{<t}}) = F(\x^{\sigma^*_{<t}}) = \sigma^*_t$. But since $\sigma_t \neq \sigma^*_t$, we have $\sigma_t \neq F(\x^{\sigma_{<t}})$. By the definition of the deterministic policy (\cref{eq:deterministic-policy}), $\pi_\theta^F(\sigma_t \mid \x^{\sigma_{<t}}) = 0$, which makes the entire joint probability $p_\theta^{\pi^F}(\x, \sigma) = 0$.

\textbf{Step 3: The marginal reduces to a single term.}
Since all orderings except $\sigma^*$ contribute zero probability, the marginal likelihood simplifies:
\begin{equation*}
    p_\theta^{\pi^F}(\x) = \sum_{\sigma} p_\theta^{\pi^F}(\x, \sigma) = p_\theta^{\pi^F}(\x, \sigma^*).
\end{equation*}
Expanding the surviving term using the joint factorization (\cref{eq:ao-arm-joint}):
\begin{equation*}
    p_\theta^{\pi^F}(\x, \sigma^*) = \prod_{t=1}^{T} \underbrace{\pi_\theta^F(\sigma^*_t \mid \x^{\sigma^*_{<t}})}_{= 1} \cdot \prod_{\ell \in \sigma^*_t} p_\theta(x^{(\ell)} \mid \x^{\sigma^*_{<t}}).
\end{equation*}
Each policy term equals 1 because $\sigma^*_t = F(\x^{\sigma^*_{<t}})$ by construction. Taking logarithms yields Equation~\eqref{eq:exact-loglikelihood}.
\end{proof}

\section{Experimental Details}
\label{app:experimental-details}

We provide details for each experiment. Training and architecture follow \citet{sahoo2024simple} and \citet{arriola2025block}.

\subsection{Perplexity Gap Experiments}
\label{app:ppl-gap}

\paragraph{Data.}
We evaluate on two in-domain datasets: OpenWebText~\citep[OWT;][]{Gokaslan2019OpenWeb} with GPT-2 tokenizer~\citep{radford2019language}, and One Billion Words~\citep[LM1B;][]{chelba2013one} with BERT tokenizer~\citep{devlin2019bert}. For zero-shot evaluation, we train on OWT and evaluate on Penn Tree Bank~\citep{marcus1993building}, Wikitext~\citep{merity2016pointer}, LM1B, Lambada~\citep{paperno2016lambada}, AG News~\citep{zhang2015character}, and Scientific Papers (PubMed and ArXiv subsets)~\citep{cohan2018discourse}. All sequences are concatenated and wrapped following \citet{arriola2025block}. We use the full validation/test split for all evaluations. For large-scale experiments, we evaluate on Wikitext, Lambada, and AG News.

\paragraph{Models.}
For test and zero-shot perplexity, we evaluate: (1) an autoregressive model (ARM) baseline, (2) SEDD~\citep{lou2024discrete}, (3) MDLM~\citep{sahoo2024simple}, and (4) BD3-LM~\citep{arriola2025block} with block sizes $L' \in \{4, 8, 16\}$. All 110M models use 12-layer DiT, hidden dim 768, 12 heads, with sequence length $L=1024$ (OWT) or $L=128$ (LM1B), trained for 1M steps on 524B tokens with batch size 512, making them directly comparable. For large-scale experiments, we evaluate LLaDA-8B-Base~\citep{nie2025large} with sequence length $L=2048$ and include Llama3-8B-Base~\citep{touvron2023llama} as a reference; note these models differ in training data and compute budget, so we do not compute gap closed for this comparison.

\paragraph{Likelihood Estimators.}
We compare three methods for computing negative log-likelihood, listed as ``Method'' in our tables.
\begin{itemize}
    \item \textbf{Exact} (ARM baseline). The autoregressive model computes exact log-likelihood via the chain rule of probability:
    \begin{equation*}
        \log p(\x) = \sum_{\ell=1}^{L} \log p(x^{(\ell)} \mid x^{(1)}, \ldots, x^{(\ell-1)}).
    \end{equation*}
    Each conditional is computed in a single, batched forward pass using causal masking, so the full log-likelihood requires one forward pass over the sequence.

    \item \textbf{ELBO} (MDMs). The ELBO provides an upper bound on the true negative log-likelihood. Because the MDM loss (\cref{eq:mdm-loss}) involves an expectation over random timesteps and masked positions, computing it exactly is intractable; instead, we use a $K$-sample Monte Carlo estimate of the ELBO. For MDLM/SEDD, the loss weights each masked position by $L/n_t$ where $n_t$ is the number of masked positions; for BD3-LM, the loss factorizes over blocks with preceding blocks revealed~\citep[see][Eq.~8]{arriola2025block}. We use $K=1$ Monte Carlo sample with a low-discrepancy sampler~\citep{kingma2021variational} for the 110M models, and $K=128$ samples for LLaDA-8B.

    \item \textbf{\textsc{DUEL}} (MDMs). Computes exact likelihood under a deterministic unmasking policy by summing the log-probabilities of revealed tokens along the single path induced by the unmasking rule (Algorithm~1 in the main paper).
\end{itemize}

\paragraph{Metrics.}
Each likelihood estimator above produces a negative log-likelihood $\mathrm{NLL} = -\log p(\x)$ for a sequence $\x$ of length $L$.
We convert this to perplexity: $\mathrm{PPL} = \exp(\mathrm{NLL}/L)$.
To compare MDMs against the ARM baseline, we define the \emph{perplexity gap} $\Delta = \mathrm{PPL}_{\mathrm{MDM}} - \mathrm{PPL}_{\mathrm{ARM}}$.
Under the ELBO, $\Delta_{\mathrm{ELBO}} = \mathrm{PPL}_{\mathrm{MDM}}^{\mathrm{ELBO}} - \mathrm{PPL}_{\mathrm{ARM}}$; under \textsc{DUEL}, $\Delta_{\textsc{DUEL}} = \mathrm{PPL}_{\mathrm{MDM}}^{\textsc{DUEL}} - \mathrm{PPL}_{\mathrm{ARM}}$.
The \emph{gap closed} (\%) measures the fraction of the ELBO gap eliminated by exact evaluation:
\begin{equation*}
    \text{Gap Closed} = \frac{\Delta_{\mathrm{ELBO}} - \Delta_{\textsc{DUEL}}}{\Delta_{\mathrm{ELBO}}} \times 100\%.
\end{equation*}
Note that in zero-shot experiments (\cref{tab:gap-closed-zeroshot}) we exclude Lambada, PubMed, and ArXiv datasets and do not report the gap closed metric. This is because the MDM already outperforms the ARM baseline under the ELBO on these datasets, making $\Delta_{\mathrm{ELBO}} \leq 0$ and gap closed ill-defined; nonetheless, \textsc{DUEL} still yields similar-magnitude perplexity reductions on these datasets (\cref{tab:ppl-zero-shot}).

\paragraph{Protocol.}
We evaluate on the full validation/test split for both ELBO and \textsc{DUEL}. For the 110M models, \textsc{DUEL} uses block-greedy unmasking with $k=1$ token per step, restricted to block size $L'=4$. For LLaDA-8B-Base, \textsc{DUEL} uses block-greedy unmasking with $k=1$, restricted to block size $L'=32$.

DUEL perplexity outperforms ELBO perplexity on every model and dataset conparison except one. The sole exception is MDLM on Lambada, where the ELBO ($\leq$48.29) is slightly lower than \textsc{DUEL} (48.42); this is not contradictory, as they measure different distributions ($p_\theta^{\pi^{\mathrm{unif}}}$ vs.\ $p_\theta^{\pi^F}$).

\begin{table*}[t]
\centering
\caption{Zero-shot perplexity ($\downarrow$) for models trained on OWT, evaluated on unseen datasets. ARM baseline in first row. \textbf{Bold}: better of ELBO/\textsc{DUEL} per model-dataset pair.}
\label{tab:ppl-zero-shot}
\begin{tabular}{ll ccccccc}
\toprule
\textbf{Model} & \textbf{Method} & \textbf{PTB} & \textbf{Wikitext} & \textbf{LM1B} & \textbf{Lambada} & \textbf{AG News} & \textbf{PubMed} & \textbf{ArXiv} \\
\midrule
ARM & Exact & 81.07 & 25.32 & 51.14 & 52.13 & 52.11 & 48.59 & 41.22 \\
\midrule
\multirow{2}{*}{SEDD}
  & ELBO & $\leq$96.33 & $\leq$35.98 & $\leq$68.14 & $\leq$48.93 & $\leq$67.82 & $\leq$43.13 & $\leq$37.89 \\
  & \textsc{DUEL} & \textbf{91.56} & \textbf{31.63} & \textbf{64.39} & \textbf{46.01} & \textbf{63.78} & \textbf{39.31} & \textbf{35.48} \\
\midrule
\multirow{2}{*}{MDLM}
  & ELBO & $\leq$90.96 & $\leq$33.22 & $\leq$64.94 & \textbf{$\leq$48.29} & $\leq$62.78 & $\leq$43.13 & $\leq$37.89 \\
  & \textsc{DUEL} & \textbf{87.57} & \textbf{30.97} & \textbf{60.94} & 48.42 & \textbf{59.81} & \textbf{41.96} & \textbf{37.57} \\
\midrule
\multirow{2}{*}{BD3-LM $L'$=4}
  & ELBO & $\leq$96.81 & $\leq$31.31 & $\leq$60.88 & $\leq$50.03 & $\leq$61.67 & $\leq$42.52 & $\leq$39.20 \\
  & \textsc{DUEL} & \textbf{83.94} & \textbf{29.43} & \textbf{57.98} & \textbf{48.56} & \textbf{56.73} & \textbf{41.41} & \textbf{38.11} \\
\bottomrule
\end{tabular}
\end{table*}

\subsection{Comparing Sampling Strategies}
\label{app:sampling-strategies}

This section provides experimental details for \cref{sec:sampling-strategies}.

\subsubsection{Comparing Fast Samplers}
\label{app:rule-comparison}

\paragraph{Data.}
OpenWebText~\citep{Gokaslan2019OpenWeb} validation split (1,000 samples, $L=1024$, GPT-2 tokenizer).

\paragraph{Model.}
BD3-LM~\citep{arriola2025block} with block size $L'=16$, trained on OWT.

\paragraph{Unmasking Rules.}
We compare four deterministic rules, all restricted to operate within blocks of size $L'=16$:
\begin{enumerate}[leftmargin=*,itemsep=2pt,topsep=2pt]
\item \textbf{Left-to-Right}: $F(\z) = \{\min\{\ell : z_\ell = \mask\}\}$
\item \textbf{Greedy Confidence}: $F(\z) = \argmax_{\ell : z_\ell = \mask} \max_{v} p_\theta(v \mid \z)$
\item \textbf{Probability Margin}~\citep{kim2025train}: $F(\z) = \argmax_{\ell : z_\ell = \mask} \left( p_\theta^{(1)}(\ell \mid \z) - p_\theta^{(2)}(\ell \mid \z) \right)$, where $p_\theta^{(1)}, p_\theta^{(2)}$ are the top-2 token probabilities
\item \textbf{Confidence Threshold}~\citep{wu2025fast}: Unmask all $\ell$ where $\max_v p_\theta(v \mid \z) > \tau$; if none exceed $\tau$, unmask the most confident position
\end{enumerate}

\paragraph{Metrics.}
We report four metrics:
\begin{enumerate}[leftmargin=*,itemsep=1pt,topsep=2pt]
\item \textbf{\textsc{DUEL} perplexity} ($\downarrow$): Exact likelihood computed on OWT validation data
\item \textbf{Generative perplexity} ($\downarrow$): Generate 1,000 samples, evaluate under GPT-2 Large~\citep{radford2019language}
\item \textbf{Entropy} ($\uparrow$): Token-level entropy of generated text; low entropy indicates repetitive or degenerate output
\item \textbf{MAUVE} ($\uparrow$)~\citep{pillutla2021mauve}: Distribution similarity between 1,000 generated samples and 1,000 reference sequences from OWT
\end{enumerate}

\paragraph{Protocol.}
We vary the number of function evaluations (NFE) $\in \{128, 256, 512, 1024\}$ by adjusting the number of tokens unmasked per forward pass, $k \in \{8, 4, 2, 1\}$ respectively, for a sequence length of $L=1024$.
For left-to-right, greedy, and probability margin, $k$ is fixed and the NFE is exactly $\lceil L / k \rceil$.

Confidence threshold adaptively selects how many tokens to unmask based on whether predictions exceed a threshold $\tau$, so the NFE varies per sequence and differs between evaluation (on held-out data) and generation.
For \textsc{DUEL} perplexity, we set $\tau \in \{0.05, 0.075, 0.15, 0.99\}$, yielding approximately $\{145, 259, 523, 1016\}$ NFE respectively.
For sample generation (generative perplexity, entropy, MAUVE), the same thresholds yield approximately $\{138, 184, 493, 973\}$ NFE.
\Cref{tab:rule-comparison} reports confidence threshold at the closest NFE to each target; \cref{fig:rule_comparison_full} plots all available data points.

\begin{figure}[t]
\centering
\includegraphics[width=\linewidth]{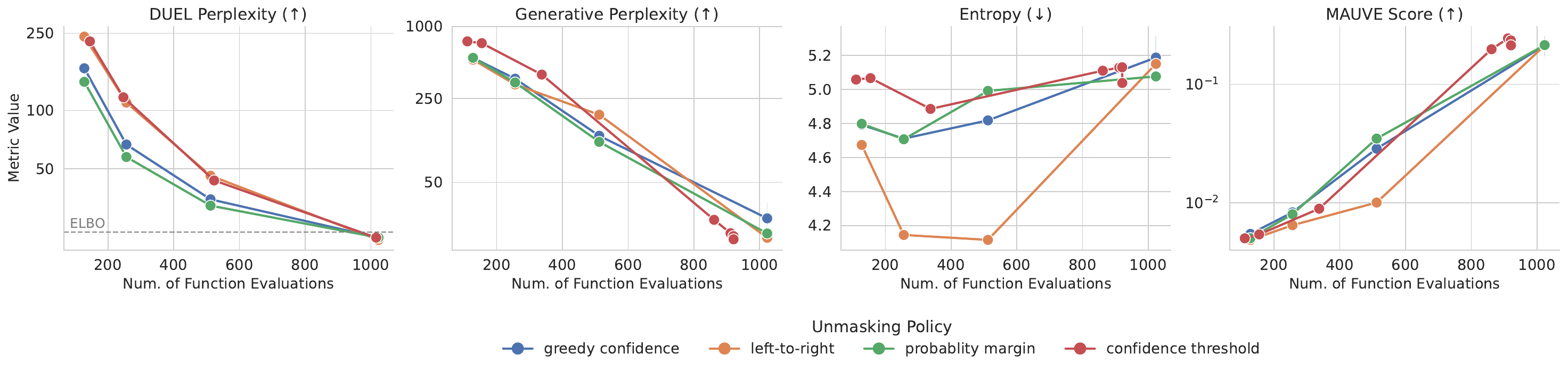}
\caption{\textbf{Comparing fast samplers across four metrics.}
Dashed line in \textsc{DUEL} perplexity: ELBO ($\leq$23.52).
\textsc{DUEL} perplexity yields consistent rankings across NFE (probability margin best at low NFE), while generative perplexity rankings cross repeatedly.
Entropy reveals that left-to-right produces degenerate low-entropy text at low NFE despite achieving favorable generative perplexity.
MAUVE saturates near zero at low NFE for all rules, providing little discriminative signal.}
\label{fig:rule_comparison_full}
\end{figure}

\paragraph{Results.}
\Cref{fig:rule_comparison_full} presents the full comparison across all four metrics.

\textit{\textsc{DUEL} perplexity.}
Rankings are consistent across NFE budgets: probability margin achieves the lowest perplexity at low NFE, followed by greedy confidence, with left-to-right and confidence threshold performing worst. All methods converge to similar perplexity ($\approx$21--22) at 1024 NFE, approaching the ELBO baseline ($\leq$23.52). The degradation from 1024 to 128 NFE ranges from $6\times$ (probability margin) to $11\times$ (left-to-right).

\textit{Generative perplexity.}
Rankings are inconsistent: left-to-right achieves the \emph{lowest} generative perplexity at 128 and 256 NFE despite having the \emph{worst} \textsc{DUEL} perplexity. This apparent contradiction is explained by the entropy results below. The magnitude of degradation is also distorted, showing $30$--$40\times$ increase from 1024 to 128 NFE.

\textit{Entropy.}
Left-to-right exhibits anomalously low entropy ($\approx$4.2) at low NFE, indicating degenerate, repetitive text. This explains its misleadingly favorable generative perplexity: repetitive text often achieves low perplexity under autoregressive models. In contrast, probability margin and greedy confidence maintain higher entropy ($\approx$4.8--5.0) even at low NFE, indicating more diverse output. All methods converge to entropy $\approx$5.1 at 1024 NFE.

\textit{MAUVE.}
All methods achieve MAUVE $< 0.02$ at 128 NFE, providing minimal discriminative signal. Scores improve to $0.1$--$0.2$ at 1024 NFE, but remain below typical thresholds for high-quality generation. The log-scale behavior indicates MAUVE is most useful for distinguishing high-quality regimes, not for comparing degraded generation.

\paragraph{Takeaways.}
Sample-based metrics can be misleading: generative perplexity rewards degenerate text, entropy requires careful interpretation, and MAUVE saturates at low quality. \textsc{DUEL} perplexity provides consistent, interpretable rankings that correctly identify probability margin as optimal at low NFE budgets.

\subsubsection{Oracle Perplexity}
\label{app:oracle-perplexity}

\paragraph{Data.}
AG News~\citep{zhang2015character} zero-shot evaluation (GPT-2 tokenizer, $L=1024$).

\paragraph{Model.}
BD3-LM with block size $L'=4$, trained on OWT (same model as \cref{sec:perplexity-gap}).

\paragraph{Method.}
For BD3-LM, \textsc{DUEL} processes each block of size $L'$ independently.
Within a block, the unmasking order is a permutation $\pi \in S_{L'}$ of the masked positions, and each permutation induces a different factorization of the block likelihood via \cref{eq:exact-loglikelihood}.
The oracle exhaustively evaluates all $L'!$ permutations per block and selects the one that minimizes the block's negative log-likelihood:
\begin{equation*}
    \mathrm{NLL}_{\text{oracle}} = \sum_{b=1}^{L/L'} \min_{\pi \in S_{L'}} \mathrm{NLL}_{\textsc{DUEL}}^{(b)}(\pi),
\end{equation*}
where $\mathrm{NLL}_{\textsc{DUEL}}^{(b)}(\pi)$ is the negative log-likelihood contribution of block $b$ under ordering $\pi$, computed by running \cref{alg:duel-likelihood} with a left-to-right unmasking rule applied to the permuted block positions.
Concretely, for each block $b$:
\begin{enumerate}[leftmargin=*,itemsep=2pt,topsep=2pt]
    \item Enumerate all $L'!$ permutations $\pi_1, \ldots, \pi_{L'!}$ of the $L'$ positions in block $b$.
    \item For each permutation $\pi_i$, run \textsc{DUEL} with the unmasking order defined by $\pi_i$: unmask positions in the order $\pi_i(1), \pi_i(2), \ldots, \pi_i(L')$, recording the log-probability of each ground-truth token as it is revealed.
    \item Select $\pi^* = \argmin_{\pi_i} \mathrm{NLL}_{\textsc{DUEL}}^{(b)}(\pi_i)$.
\end{enumerate}
For $L'=4$, this evaluates $4!=24$ permutations per block, incurring a $24\times$ cost relative to a single \textsc{DUEL} evaluation.

\paragraph{Unmasking Rules.}
In addition to the oracle, we evaluate four standard deterministic rules for comparison: left-to-right, greedy confidence ($k=1$), probability margin ($k=1$), and confidence threshold ($\tau=0.95$). All rules are restricted to block size $L'=4$; see \cref{app:rule-comparison} for rule definitions.


\end{document}